\documentclass[11pt]{article}
\usepackage{hyperref}
\usepackage{yub}
\usepackage{paralist}

\usepackage{microtype}
\usepackage{graphicx}
\usepackage{subfigure}
\usepackage{booktabs} 

\usepackage[top=1in, bottom=1in, left=1in, right=1in]{geometry}
\usepackage[numbers]{natbib}

\usepackage{algorithm}
\usepackage{algorithmic}

\usepackage{mathtools}
\mathtoolsset{showonlyrefs}

\def\shownotes{0}  
\ifnum\shownotes=1
\newcommand{\authnote}[2]{$\ll$\textsf{\footnotesize #1 notes: #2}$\gg$}
\else
\newcommand{\authnote}[2]{}
\fi
\newcommand{\yub}[1]{{\color{red}\authnote{Yu}{#1}}}

\newcommand{\tx}[1]{{\color{violet}\authnote{Tengyang}{#1}}}

\renewcommand{\setto}{\leftarrow}
\newcommand{\last}{{\rm last}}

\newcommand{\Reg}{{\rm Regret}}
\newcommand{\sw}{{\rm switch}}

\makeatletter
\def\blfootnote{\gdef\@thefnmark{}\@footnotetext}
\makeatother


\usepackage{amsthm}

\makeatletter
\newtheorem*{rep@theorem}{\rep@title}
\newcommand{\newreptheorem}[2]{%
\newenvironment{rep#1}[1]{%
 \def\rep@title{#2 \ref{##1}}%
 \begin{rep@theorem}}%
 {\end{rep@theorem}}}
\makeatother

\newreptheorem{theorem}{Theorem}

\usepackage{mathtools}
\mathtoolsset{showonlyrefs}

\hypersetup{
    colorlinks,
    linkcolor={blue!50!black},
    citecolor={blue!50!black},
}
\colorlet{linkequation}{blue}

\title{Provably Efficient Q-Learning with Low Switching Cost}
\author{Yu Bai
  \thanks{Stanford University. \texttt{yub@stanford.edu}}
  \and Tengyang Xie
  \thanks{University of Illinois at
    Urbana-Champaign. \texttt{tx10@illinois.edu}}
  \and Nan Jiang
  \thanks{University of Illinois at
    Urbana-Champaign. \texttt{nanjiang@illinois.edu}}
  \and
  Yu-Xiang Wang
  \thanks{UC Santa Barbara. \texttt{yuxiangw@cs.ucsb.edu}}
}
\date{\today}

\begin{document}
\maketitle

\begin{abstract}
  We take initial steps in studying PAC-MDP algorithms with limited
  adaptivity, that is, algorithms that change its exploration policy
  as infrequently as possible during regret minimization.  This is
  motivated by the difficulty of running fully adaptive algorithms in
  real-world applications (such as medical domains), and we propose to
  quantify adaptivity using the notion of \emph{local switching cost}.
  Our main contribution, Q-Learning with UCB2 exploration, is a
  model-free algorithm for $H$-step episodic MDP that achieves
  sublinear regret whose local switching cost in $K$ episodes is
  $O(H^3SA\log K)$, and we provide a lower bound of $\Omega(HSA)$ on
  the local switching cost for any no-regret algorithm. Our algorithm
  can be naturally adapted to the concurrent setting
  \citep{guo2015concurrent}, which yields nontrivial results that
  improve upon prior work in certain aspects.
\end{abstract}
\section{Introduction}

This paper is concerned with reinforcement learning (RL) under
\emph{limited adaptivity} or \emph{low switching cost}, a setting in
which the agent is allowed to act in the environment for a long period
but is constrained to switch its policy for at most $N$ times. A small
switching cost $N$ restricts the agent from frequently adjusting its
exploration strategy based on feedback from the environment. 

There are strong practical motivations for developing RL algorithms
under limited adaptivity. The setting of restricted policy switching
captures various real-world settings where deploying new policies
comes at a cost. For example, in medical applications where actions
correspond to treatments, it is often unrealistic to execute fully
adaptive RL algorithms -- instead one can only run a fixed policy
approved by the domain experts to collect data, and a separate
approval process is required every time one would like to switch to a
new policy~\citep{lei2012smart, almirall2012designing,
  almirall2014introduction}.  In personalized
recommendation~\cite{theocharous2015personalized}, it is
computationally impractical to adjust the policy online based on
instantaneous data (for instance, think about online video
recommendation where there are millions of users generating feedback
at every second). A more common practice is to aggregate data in a
long period before deploying a new policy. In problems where we run RL
for compiler optimization \citep{ashouri2018survey} and hardware
placements \citep{mirhoseini2017device}, as well as for learning to
optimize databases~\citep{krishnan2018learning}, often it is desirable
to limit the frequency of changes to the policy since it is costly to
recompile the code, to run profiling, to reconfigure an FPGA devices,
or to restructure a deployed relational database. The problem is even
more prominent in the RL-guided new material discovery as it takes
time to fabricate the materials and setup the experiments
\citep{raccuglia2016machine,nguyen2019incomplete}. In many of these
applications, adaptivity turns out to be really the bottleneck.

Understanding limited adaptivity RL is also important from a
theoretical perspective.  First, algorithms with low adaptivity
(a.k.a.  ``batched'' algorithms) that are as effective as their fully
sequential counterparts have been established in
bandits~\cite{perchet2016batched,gao2019batched}, online
learning~\cite{cesa2013online}, and
optimization~\cite{duchi2018minimax}, and it would be interesting to
extend such undertanding into RL. Second, algorithms with few policy
switches are naturally easy to parallelize as there is no need for
parallel agents to communicate if they just execute the same
policy. Third, limited adaptivity is closed related to off-policy
RL\footnote{In particular, $N=0$ corresponds to off-policy RL, where
  the algorithm can only choose one data collection policy
  \citep{hanna2017data}.} and offers a relaxation less challenging
than the pure off-policy setting. We would also like to note that
limited adaptivity can be viewed as a constraint for designing RL
algorithms, which is conceptually similar to those in constrained
MDPs~\cite{chow2018lyapunov,yu2019convergent}.






In this paper, we take initial steps towards studying theoretical
aspects of limited adaptivity RL through designing \emph{low-regret
  algorithms} with limited adaptivity. We focus on model-free
algorithms, in particular Q-Learning, which was recently shown to
achieve a $\wt{O}(\sqrt{{\rm poly}(H)\cdot SAT})$ regret bound with
UCB exploration and a careful stepsize choice by~\citet{jin2018q}. Our
goal is to design Q-Learning type algorithms that achieve similar
regret bounds with a bounded switching cost.


The main contributions of this paper are summarized as follows:
\begin{compactenum}[\textbullet]
	\setlength{\itemsep}{5pt}
\item We propose a notion of \emph{local switching cost} that captures
  the adaptivity of an RL algorithm in episodic MDPs
  (Section~\ref{section:problem-setup}). Algorithms with lower local
  switching cost will make fewer switches in its
  deployed policies. 
\item Building on insights from the UCB2 algorithm in multi-armed
  bandits~\cite{auer2002finite} (Section~\ref{section:ucb2-bandits}),
  we propose our main algorithms, \emph{Q-Learning with
    UCB2-\{Hoeffding, Bernstein\} exploration}. We prove that these
  two algorithms achieve $\wt{O}(\sqrt{H^{\set{4,3}}SAT})$ regret
  (respectively) and $O(H^3SA\log(K/A))$ local switching cost
  (Section~\ref{section:ql-ucb2}). The
  regret matches their vanilla counterparts of~\cite{jin2018q} but the
  switching cost is only logarithmic in the number of episodes.
\item We show how our low switching cost algorithms can be applied in
  the \emph{concurrent RL} setting~\citep{guo2015concurrent}, in which
  multiple agents can act in parallel
  (Section~\ref{section:concurrent}). The parallelized versions of
  our 
  algorithms with UCB2 exploration give rise to \emph{Concurrent
    Q-Learning} algorithms, which achieve a nearly linear speedup in
  execution time and compares favorably against existing concurrent
  algorithms in sample complexity for exploration.
\item We show a simple $\Omega(HSA)$ lower bound on the switching cost
  for any sublinear regret algorithm, which has at most a
  $O(H^2\log(K/A))$ gap from the upper bound
  (Section~\ref{section:lower-bound}).
\end{compactenum}

\subsection{Prior work}

\paragraph{Low-regret RL} Sample-efficient RL has been studied
extensively since the classical work of~\citet{kearns2002near}
and~\citet{brafman2002r}, with a focus on obtaining a near-optimal
policy in polynomial time, i.e. PAC guarantees. A subsequent line of
work initiate the study of regret in RL and provide algorithms that
achieve regret $\wt{O}(\sqrt{{\rm poly}(H,S,A)\cdot T})$
~\cite{jaksch2010near,osband2013more,agrawal2017optimistic}.  In our
episodic MDP setting, the information-theoretic lower bound for the
regret is $\Omega(\sqrt{H^2SAT})$, which is matched in recent work by
the UCBVI~\cite{azar2017minimax} and ORLC~\cite{dann2018policy}
algorithms. On the other hand, while all the above low-regret
algorithms are essentially model-based, the recent work
of~\cite{jin2018q} shows that model-free algorithms such as Q-learning
are able to achieve $\wt{O}(\sqrt{H^{\set{4,3}}SAT})$ regret which is
only $O(\sqrt{H})$ worse than the lower bound.



\paragraph{Low switching cost / batched algorithms}
\citet{auer2002finite} propose UCB2 in bandit problems, which achieves
the same regret bound as UCB but has switching cost only $O(\log T)$
instead of the naive $O(T)$. \citet{cesa2013online}~study the
switching cost in online learning in both the adversarial and
stochastic setting, and design an algorithm for stochastic bandits
that acheive optimal regert and $O(\log\log T)$ switching cost.

Learning algorithms with switching cost bounded by a fixed $O(1)$
constant is often referred to as \emph{batched algorithms}. Minimax
rates for batched algorithms have been established in various problems
such as bandits~\cite{perchet2016batched,gao2019batched} and convex
optimization~\cite{duchi2018minimax}. In all these scenarios, minimax
optimal $M$-batch algorithms are obtained for all $M$, and their rate
matches that of fully adaptive algorithms once $M=O(\log\log T)$.


\section{Problem setup}
\label{section:problem-setup}
In this paper, we consider undiscounted episodic tabular MDPs of the
form $(H,\mc{S},\P,\mc{A},r)$. The MDP has horizon
$H$ with trajectories of the form
$(x_1,a_1,\dots,x_H,a_H,x_{H+1})$, where $x_h\in\mc{S}$ and $a_h\in\mc{A}$.
The state space $\mc{S}$ and action
space $\mc{A}$ are discrete with $|\mc{S}|=S$ and $|\mc{A}|=A$.
The initial state $x_1$ can be either adversarial (chosen by an
adversary who has access to our algorithm), or stochastic specified by
some distribution $\P_0(x_1)$. For
any $(h,x_h,a_h)\in[H]\times\mc{S}\times\mc{A}$, the transition
probability is denoted as $\P_h(x_{h+1}|x_h,a_h)$. The reward is
denoted as $r_h(x_h,a_h)\in[0,1]$, which we assume to be
deterministic\footnote{Our results can be straightforwardly extended
  to the case with stochastic rewards.}. We assume in addition that
$r_{h+1}(x)=0$ for all $x$, so that the last state $x_{H+1}$ is
effectively an (uninformative) absorbing state.

A deterministic policy $\pi$ consists of $H$ sub-policies
$\pi^h(\cdot):\mc{S}\to \mc{A}$. 
\tx{$\pi_h$?}
For any deterministic policy $\pi$, let $V^\pi_h(\cdot):\mc{S}\to\R$
and $Q^\pi_h(\cdot,\cdot):\mc{S}\times\mc{A}\to\R$ denote its value
function and state-action value function at the $h$-th step
respectively. Let $\pi_\star$ denote an optimal policy, and
$V_h^\star=V_h^{\pi_\star}$ and $Q_h^\star=Q_h^{\pi_\star}$ denote the
optimal $V$ and $Q$ functions for all $h$. As a convenient short hand,
we denote
$[\P_h V_{h+1}](x,a)\defeq \E_{x'\sim \P(\cdot | x,a)}[ V_{h+1}(x')]$
and also use $[\what{\P}_h V_{h+1}](x_h,a_h)\defeq V_{h+1}(x_{h+1})$
in the proofs to denote observed transition. Unless
otherwise specified, we will focus on deterministic policies in this
paper, which will be without loss of generality as there exists at
least one deterministic policy $\pi_\star$ that is optimal.

\paragraph{Regret}
We focus on the regret for measuring the performance of RL
algorithms. Let $K$ be the number of episodes that the agent can play.
(so that total number of steps is $T\defeq KH$.)
The regret of an algorithm is defined as
\begin{equation*}
  \Reg(K) \defeq \sum_{k=1}^K \left[V_1^\star(x_1^k) -
    V_1^{\pi_k}(x_1^k)\right],
\end{equation*}
where $\pi_k$ is the policy it employs before episode $k$
starts, and $V_1^\star$ is the optimal value function for the entire
episode.

\paragraph{Miscellanous notation} We use standard Big-Oh notations in
this paper: $A_n=O(B_n)$ means that there exists
an absolute constant $C>0$ such that $A_n\le CB_n$ (similarly
$A_n=\Omega(B_n)$ for $A_n\ge CB_n$). $A_n=\wt{O}(B_n)$
means that $A_n\le C_nB_n$ where $C_n$ depends at most
poly-logarithmically on all the problem parameters.

\subsection{Measuring adaptivity through local switching cost}
\label{section:switching-cost}
To quantify the adaptivity of RL algorithms, we consider the
following notion of \emph{local switching cost} for RL algorithms.
\begin{definition}
  The local switching cost (henceforth also ``switching cost'')
  between any pair of policies $(\pi,\pi')$ is defined as the number
  of $(h,x)$ pairs on which $\pi$ and $\pi'$ are different:
  \begin{equation*}
    n_\sw(\pi, \pi') \defeq \left| \set{(h,x)\in[H]\times \mc{S}:\pi^h(x)\neq
        [\pi']^h(x)} \right|.
  \end{equation*}
  For an RL algorithm that employs policies
  $(\pi_1,\dots,\pi_K)$, its local switching cost is defined as
  \begin{equation*}
    N_\sw \defeq \sum_{k=1}^{K-1} n_\sw(\pi_k, \pi_{k+1}).
  \end{equation*}
\end{definition}
Note that (1) $N_\sw$ is a random variable in general, as $\pi_k$ can
depend on the outcome of the MDP; (2) we have the trivial bound
$n_\sw(\pi,\pi')\le HS$ for any $(\pi,\pi')$ and
$N_\sw(\mc{A})\le HS(K-1)$ for any algorithm $\mc{A}$.\footnote{To
  avoid confusion, we also note that our local switching cost is not
  to measure the change of the sub-policy $\pi^h$ between timestep $h$
  and $h+1$ (which is in any case needed due to potential
  non-stationarity), but rather to measure the change of the entire
  policy $\pi_k=\set{\pi_k^h}$ between episode $k$ and $k+1$.}



{\bf Remark} The local switching cost extends naturally the notion of
switching cost in online learning~\cite{cesa2013online} and is
suitable in scenarios where the cost of deploying a new policy scales
with the portion of $(h,x)$ on which the action $\pi^h(x)$ is
changed.  A closely related notion of adaptivity is the
\emph{global switching cost}, which simply measures how many times the
algorithm switches its entire policy:
\begin{equation*}
  N_\sw^{\rm gl} = \sum_{k=1}^{K-1} \indic{\pi_k\neq \pi_{k+1}}.
\end{equation*}
As $\pi_k\neq \pi_{k+1}$ implies $n_\sw(\pi_k, \pi_{k+1})\ge 1$, we
have the trivial bound that $N_\sw^{\rm gl}\le N_\sw$. However, the
global switching cost can be substantially smaller for algorithms
that tend to change the policy ``entirely'' rather than
``locally''. In this paper, we focus on bounding $N_\sw$, and leave
the task of tighter bounds on $N_\sw^{\rm gl}$ as future work.

\section{UCB2 for multi-armed bandits}
\label{section:ucb2-bandits}
To gain intuition about the switching cost, we briefly review the UCB2
algorithm~\cite{auer2002finite} on multi-armed bandit problems, which
achieves the same regret bound as the original UCB but has a
substantially lower switching cost.

The multi-armed bandit problem can be viewed as an RL problem with
$H=1$, $S=1$, so that the agent needs only play one action
$a\in\mc{A}$ and observe the (random) reward $r(a)\in[0,1]$. The
distribution of $r(a)$'s are unknown to the agent, and the goal is to
achieve low regret.

The UCB2 algorithm is a variant of the celebrated UCB (Upper
Confidence Bound) algorithm for bandits. UCB2 also maintains upper
confidence bounds on the true means $\mu_1,\dots,\mu_A$, but instead
plays each arm multiple times rather than just once when it's found to
maximize the upper confidence bound. Specifically, when an arm is
found to maximize the UCB for the $r$-th time, UCB2 will play it
$\tau(r) - \tau(r-1)$ times, where
\begin{equation}
  \label{equation:def-tau}
  \tau(r) = (1+\eta)^r
\end{equation}
for $r=0,1,2,\dots$ and some parameter $\eta\in(0,1)$ to be
determined.
\footnote{For convenience, here we treat $(1+\eta)^r$ as an
  integer. In Q-learning we could not make this approximation (as we
  choose $\eta$ super small), and will massage the sequence
  $\tau(r)$ to deal with it.}
The full UCB2 algorithm is presented in Algorithm~\ref{algorithm:ucb2}.

\begin{algorithm}[ht]
   \caption{UCB2 for multi-armed bandits}
   \label{algorithm:ucb2}
\begin{algorithmic}
   \INPUT Parameter $\eta\in(0,1)$.
   \STATE {\bfseries Initialize:} $r_j=0$ for $j=1,\dots,A$. Play each
   arm once. Set $t\setto 0$ and $T\setto T-A$.
   \WHILE{$t\le T$}
   \STATE Select arm $j$ that maximizes $\overline{r}_j+a_{r_j}$,
   where $\overline{r}_j$ is the average reward obtained from arm $j$
   and $a_r= O(\sqrt{\log T/\tau(r)})$ (with some specific choice.)
   \STATE Play arm $j$ exactly $\tau(r_j+1) - \tau(r_j)$ times.
   \STATE Set $t\setto t + \tau(r_j+1) - \tau(r_j)$ and $r_j\setto r_j + 1$.
   \ENDWHILE
\end{algorithmic}
\end{algorithm}

\begin{theorem}[\citet{auer2002finite}]
  \label{theorem:ucb2}
  For $T\ge \max_{i:\mu_i< \mu^\star}\frac{1}{2\Delta_i^2}$, the
  UCB2 algorithm acheives expected regret bound
  \begin{equation*}
    \E\left[ \sum_{t=1}^T (\mu^\star - \mu_t) \right] \le O_\eta\left( \log
      T\cdot \sum_{i:\mu_i<\mu^\star} \frac{1}{\Delta_i} \right),
  \end{equation*}
  where $\Delta_i\defeq \mu^\star - \mu_i$ is the gap between arm $i$
  and the optimal arm. Further, the switching cost is at most
  $O(\frac{A\log (T/A)}{\eta})$.
\end{theorem}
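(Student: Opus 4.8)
Both parts follow the original analysis of \citet{auer2002finite}; I sketch the two ingredients.

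\textbf{Regret.} The plan is the standard argument of bounding the number of pulls of each suboptimal arm. First I would define a ``clean event'' on which $|\overline{r}_j - \mu_j| \le a_r$ holds simultaneously for every arm $j$ and every epoch index $r$, where $\overline{r}_j$ denotes the empirical mean over the $\tau(r)$ pulls arm $j$ has received after $r$ epochs. By Hoeffding's inequality (the rewards of a fixed arm being i.i.d.), a union bound over the $O(A\log T)$ relevant $(j,r)$ pairs, and the choice $a_r = O(\sqrt{\log T/\tau(r)})$ with the appropriate constant, this event has probability at least $1 - 1/T$; on its complement the regret is at most $T$, contributing $O(1)$ in expectation. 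On the clean event, if a suboptimal arm $i$ is selected to begin an epoch while its counter equals $r$, then its index $\overline{r}_i + a_r$ must exceed that of the optimal arm, which is at least $\mu^\star$; chaining the two clean-event bounds forces $2a_r \ge \Delta_i$, i.e.\ $\tau(r) = (1+\eta)^r = O(\log T/\Delta_i^2)$. Hence arm $i$'s counter never exceeds $1 + \log_{1+\eta}O(\log T/\Delta_i^2)$, so its total number of pulls is at most $(1+\eta)\cdot O(\log T/\Delta_i^2)$ (the extra factor $1+\eta$ being the ``overshoot'': when $i$ is chosen at counter value $r$ it is then played $\tau(r+1)-\tau(r)$ further times regardless). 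Multiplying by $\Delta_i$ and summing over $i$ with $\mu_i<\mu^\star$ yields the claimed $O_\eta\!\left(\log T\cdot\sum_{i:\mu_i<\mu^\star}1/\Delta_i\right)$ bound (the residual $\eta$-dependent lower-order terms come from the union-bound/overshoot bookkeeping); the hypothesis $T\ge \max_i 1/(2\Delta_i^2)$ is used only to keep the relevant logarithmic factors positive.

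\textbf{Switching cost.} A policy switch can occur only at an epoch boundary, so the total number of switches is at most $A$ (for the initial round-robin) plus $\sum_{j=1}^A r_j$, where $r_j$ is the final epoch count of arm $j$. After $r_j$ epochs arm $j$ has been pulled $n_j \ge \tau(r_j-1) = (1+\eta)^{r_j-1}$ times, so $r_j \le 1 + \log_{1+\eta} n_j$. The total number of pulls satisfies $\sum_j n_j \le 2T$ (the loop stops once $t$ exceeds $T-A$, and the final epoch overshoots by at most a factor $1+\eta$), so by concavity of $x\mapsto \log_{1+\eta}x$ and Jensen's inequality, $\sum_j r_j \le A + A\log_{1+\eta}(2T/A)$. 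Finally $\log_{1+\eta}(T/A) = \ln(T/A)/\ln(1+\eta) \le 2\ln(T/A)/\eta$, using $\ln(1+\eta)\ge \eta/2$ for $\eta\in(0,1)$; hence $N_\sw = O(A\log(T/A)/\eta)$.

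\textbf{Main obstacle.} The switching-cost part is just bookkeeping on the epoch structure, and the convexity step is what converts the naive $O(A\log T/\eta)$ into $O(A\log(T/A)/\eta)$. The real work is in the regret part: setting up the clean event so that the union bound is taken over the $O(A\log T)$ epochs rather than over individual time steps (which would be hopelessly lossy), and correctly handling the epoch overshoot, which is precisely what produces the $(1+\eta)$ factor and hence the tension between regret and switching cost as $\eta$ is varied. Pinning down the exact constants and the role of the hypothesis on $T$ is the only fiddly point; otherwise the argument is routine.
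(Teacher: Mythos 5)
Your proposal is correct and follows essentially the same route as the paper: the regret part is the standard UCB2 clean-event/epoch-overshoot analysis of \citet{auer2002finite}, which the paper simply cites, and your switching-cost argument (bounding $\sum_j r_j$ from $\sum_j (1+\eta)^{r_j}\lesssim T$ via Jensen and $\ln(1+\eta)\ge\eta/2$) is exactly the convexity remark the paper gives after the theorem statement. No gaps worth noting.
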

The switching cost bound in Theorem~\ref{theorem:ucb2} comes directly
from the fact that $\sum_{i=1}^A(1+\eta)^{r_i}\le T$ implies
$\sum_{i=1}^A r_i\le O(A\log(T/A)/\eta)$, by the convexity of
$r\mapsto (1+\eta)^r$ and Jensen's inequality. Such an approach can be
fairly general, and we will follow it in sequel to develop RL
algorithm with low switching cost.


\section{Q-learning with UCB2 exploration}
\label{section:ql-ucb2}
In this section, we propose our main algorithm, Q-learning with UCB2
exploration, and show that it achieves sublinear regret as well as
logarithmic local switching cost.

\subsection{Algorithm description}
\label{section:ql-ucb2-algorithm}
\paragraph{High-level idea}
Our algorithm maintains wo sets of optimistic $Q$ estimates: a
\emph{running estimate} $\wt{Q}$ which is updated after every episode,
and a \emph{delayed estimate} $Q$ which is only updated occasionally
but used to select the action. In between two updates to $Q$, the
policy stays fixed, so the number of policy switches is bounded by the
number of updates to $Q$.

To describe our algorithm, let $\tau(r)$ be defined as
\begin{align*}
  \tau(r) = \ceil{(1+\eta)^r},~~~r=1,2,\dots
\end{align*}
and define the \emph{triggering sequence} as
\begin{equation}
  \label{equation:switching-sequence}
  \begin{aligned}
    & \set{t_n}_{n\ge 1} = \set{1, 2, \dots, \tau(r_\star)} \cup \set{
      \tau(r_\star+1), \tau(r_\star+2),\dots},
  \end{aligned}
\end{equation}
where the parameters $(\eta, r_\star)$ will be inputs to the
algorithm.
Define for all $t\in\set{1,2,\dots}$ the quantities
\begin{align*}
  \tau_{\last}(t) \defeq \max\set{t_n:t_n\le t}~~~{\rm
  and}~~~\alpha_t = \frac{H+1}{H+t}. 
\end{align*}

\paragraph{Two-stage switching strategy}
The triggering sequence~\eqref{equation:switching-sequence} defines a
\emph{two-stage strategy} for switching policies. Suppose for a given
$(h, x_h)$, the algorithm decides to take some particular $a_h$ for
the $t$-th time, and has observed $(r_h, x_{h+1})$ and updated the
running estimate $\wt{Q}_h(x_h,a_h)$ accordingly. Then, whether to
also update the policy network $Q$ is decided as
\begin{itemize}
\item Stage I: if $t\le \tau(r_\star)$, then always perform the update
  $Q_h(x_h,a_h) \setto \wt{Q}_h(x_h,a_h)$.
\item Stage II: if $t>\tau(r_\star)$, then perform the above update
  only if $t$ is in the triggering sequence,
  that is, $t=\tau(r)=\ceil{(1+\eta)^r}$ for some $r>r_\star$.
\end{itemize}
In other words, for any state-action pair, the algorithm performs
eager policy update in the beginning $\tau(r_\star)$ visitations, and
switches to delayed policy update after that according to UCB2
scheduling.

\paragraph{Optimistic exploration bonus} We employ either a
Hoeffding-type or a Bernstein-type exploration bonus to make sure that
our running $Q$ estimates are optimistic. The full algorithm with
Hoeffding-style bonus is presented in
Algorithm~\ref{algorithm:ql-ucb2}.

\begin{algorithm*}[ht]
   \caption{Q-learning with UCB2-Hoeffding (UCB2H) Exploration}
   \label{algorithm:ql-ucb2}
\begin{algorithmic}
   \INPUT Parameter $\eta\in(0,1)$, $r_\star\in\Z_{>0}$, and $c>0$.
   \STATE {\bfseries Initialize:} $\wt{Q}_h(x,a)\setto H$,
   $Q_h\setto \wt{Q}_h$, $N_h(x,a)\setto 0$ for all
   $(x,a,h)\in\mc{S}\times \mc{A}\times [H]$.
   \FOR{episode $k=1,\dots,K$}
   \STATE Receive $x_1$.
   \FOR{step $h=1,\dots, H$}
   \STATE Take action $a_h\setto \argmax_{a'}Q_h(x_h,a')$, and observe
   $x_{h+1}$.
   \STATE $t=N_h(x_h, a_h)\setto N_h(x_h, a_h) + 1$;
   \STATE $b_t = c\sqrt{H^3\ell/t}$ (Hoeffding-type bonus);
   \STATE $\wt{Q}_h(x_h, a_h)\setto (1-\alpha_t)\wt{Q}_h(x_h, a_h) +
   \alpha_t[r_h(x_h, a_h) + \wt{V}_{h+1}(x_{h+1}) + b_t]$.
   \STATE $\wt{V}_h(x_h)\setto \min\set{H, \max_{a'\in\mc{A}}\wt{Q}_h(x_h,
     a')}$.
   {\color{red!80!black}
   \IF{$t \in \set{t_n}_{n\ge 1}$ (where $t_n$ is defined
     in~\eqref{equation:switching-sequence})}
   \STATE (Update policy) $Q_h(x_h,\cdot) \setto \wt{Q}_h(x_h,\cdot)$.
   \ENDIF}
   \ENDFOR
   \ENDFOR
\end{algorithmic}
\end{algorithm*}

\subsection{Regret and switching cost guarantee}
\label{section:ql-ucb2-result}
We now present our main results.
\begin{theorem}[Q-learning with UCB2H exploration achieves sublinear
  regret and low switching cost]
  \label{theorem:ql-ucb2}
  Choosing $\eta=\frac{1}{2H(H+1)}$ and
  $r_\star=\ceil{\frac{\log(10H^2)}{\log(1+\eta)}}$,
  with probability at least $1-p$, the regret of
  Algorithm~\ref{algorithm:ql-ucb2} is bounded by
  $\wt{O}(\sqrt{H^4SAT})$.
  Further, the local switching cost is bounded as
  $N_\sw\le O(H^3SA\log(K/A))$.
\end{theorem}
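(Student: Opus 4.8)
The plan is to split the argument into two essentially independent parts: (i) the regret bound, and (ii) the switching-cost bound. For (i), I would follow the analysis of \citet{jin2018q} for Q-learning with UCB-Hoeffding, and argue that the only change is that actions are chosen according to the \emph{delayed} estimate $Q$ rather than the running estimate $\wt{Q}$. The key point is that, because of the two-stage switching strategy, whenever a state-action pair $(h,x_h,a_h)$ is visited for the $t$-th time with $t > \tau(r_\star)$, the delayed estimate $Q_h(x_h,\cdot)$ equals $\wt{Q}_h(x_h,\cdot)$ as of the last trigger time $\tau_{\last}(t)$, and $\tau_{\last}(t) \ge t/(1+\eta)$. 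So the action played is greedy with respect to a $Q$-estimate that used at least $t/(1+\eta)$ (more precisely, a comparable number of) samples. Monotonicity of $\wt{Q}$ in the number of visits (it only decreases, up to the bonus) together with $\eta = \frac{1}{2H(H+1)}$ being tiny means the staleness inflates each per-step error term by only a $(1+O(1/H^2))$ factor, which is absorbed into constants. During Stage I ($t \le \tau(r_\star)$), $Q = \wt{Q}$ exactly, so there is no loss at all. I would also need to re-verify that the optimism property $\wt{Q}_h \ge Q_h^\star$ still holds — this is unaffected since the running estimate is updated exactly as in \cite{jin2018q} — and that $Q_h \ge Q_h^\star$ too, which follows because $Q_h$ is a past snapshot of $\wt{Q}_h$ and $\wt{Q}_h$ is optimistic throughout (using that $r_\star$ is chosen large enough, $\tau(r_\star) \ge 10H^2$, so early eager updates don't break anything). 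Assembling the per-episode regret decomposition and the pigeonhole/recursion over $h$ exactly as in \cite{jin2018q} then yields $\wt{O}(\sqrt{H^4SAT})$.

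For (ii), the switching-cost bound, I would argue exactly as in the paragraph following Theorem~\ref{theorem:ucb2}. Each $(h,x)$ pair contributes to $N_\sw$ only when some action from that state has its visit count $t$ hit the triggering sequence $\{t_n\}$; each such event changes $\pi^h(x)$ for at most one value of $x$ (the one just visited), so it suffices to bound the total number of trigger events across all $(h,x,a)$. Fix $h$. Let $N_h(x,a)$ be the final visit count of $(h,x,a)$; the number of triggers at $(h,x,a)$ in Stage II is the number of $r > r_\star$ with $\tau(r) = \ceil{(1+\eta)^r} \le N_h(x,a)$, which is $O(\log(N_h(x,a))/\eta)$, plus at most $\tau(r_\star) = O(H^2)$ triggers in Stage I. Summing over the $SA$ pairs $(x,a)$ and using $\sum_{x,a} N_h(x,a) = K$ together with concavity of $\log$ (Jensen), $\sum_{x,a}\log N_h(x,a) \le SA\log(K/(SA))$, gives $O(SA\log(K/A)/\eta + SA H^2)$ triggers at level $h$. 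Summing over $h\in[H]$ and plugging in $1/\eta = 2H(H+1) = O(H^2)$ yields $N_\sw \le O(H^3 SA\log(K/A)) + O(H^3 SA) = O(H^3 SA\log(K/A))$, as claimed. (The Stage-I term is also $O(H^3SA)$ and is absorbed.)

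A couple of technical points deserve care. First, in Stage I every single visit triggers an update, so one might worry that Stage I alone costs $O(H^2 \cdot HSA) = O(H^3SA)$ switches — this is fine, it is within the stated bound, but I should state it explicitly rather than sweep it under the rug. Second, I need $\tau$ to be genuinely increasing and integer-valued: the ceiling in $\tau(r)=\ceil{(1+\eta)^r}$ handles integrality, but with $\eta$ so small, consecutive values $\ceil{(1+\eta)^r}$ and $\ceil{(1+\eta)^{r+1}}$ can coincide for small $r$; this only \emph{reduces} the number of distinct trigger points, so the switching bound is unaffected, and for the regret analysis I just need that between consecutive distinct triggers the gap in counts is a $1+\eta$ multiplicative factor, which holds once $(1+\eta)^r \gtrsim 1/\eta$, i.e., for $r \gtrsim r_\star$ (another reason $r_\star$ is set where it is).

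\textbf{Main obstacle.} The hard part is part (i): carefully pushing the staleness of $Q$ through the Q-learning regret recursion of \cite{jin2018q} without losing more than constant/low-order factors. Specifically, the recursion there tracks, for each episode $k$ and step $h$, the quantity $(\wt{Q}_h - Q_h^\star)(x_h^k, a_h^k)$ and relates it to the same quantity at step $h+1$ via the learning-rate weights $\alpha_t^i$; here the action $a_h^k$ is instead $\argmax_a Q_h(x_h^k,a)$, and I must show $\max_a Q_h(x_h^k,a) \ge \max_a \wt{Q}_h(x_h^k,a) - (\text{small})$ or, more usefully, bound $\wt{V}_h^k(x_h^k) - V_h^\star(x_h^k)$ by a stale version of itself plus a controllable error. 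The cleanest route is probably to show that the delayed estimate, at the time it is used, differs from the current running estimate by a per-coordinate amount that telescopes like the bonus terms (since $\wt{Q}$ changes by $O(\alpha_t(\text{bonus} + \text{value gap}))$ per update and only $\eta t$ updates separate a count $t$ from its last trigger), and verify this extra error sums to the same $\wt{O}(\sqrt{H^4SAT})$ order. I expect this to work precisely because $\eta=\Theta(1/H^2)$ was chosen to make the accumulated staleness error of the same order as (indeed smaller than) the main term, but writing it out requires reproducing a fair chunk of the \cite{jin2018q} bookkeeping.
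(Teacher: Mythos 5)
Your switching-cost argument is correct and is essentially the paper's: count trigger events per $(h,x,a)$, bound Stage~I by $\tau(r_\star)=O(H^2)$ and Stage~II by $O(\log N_h(x,a)/\eta)$, apply Jensen over the $SA$ pairs, and multiply by $H$ with $1/\eta=O(H^2)$. The regret half, however, has a genuine gap: the part you flag as the "main obstacle" is precisely the technical core of the proof, and your plan for it is both incomplete and partly based on a false claim. The running estimate $\wt{Q}$ in the \cite{jin2018q}-style update is \emph{not} monotone in the visit count (there is no $\min$ with the previous value, unlike UCBVI-type algorithms), so "it only decreases, up to the bonus" cannot be the reason staleness is benign. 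More importantly, the staleness error is not an additive term that "telescopes like the bonus terms" and gets absorbed into constants: writing $k'$ for the episode of the last triggered snapshot, the paper first shows (Lemma~\ref{lemma:max-error}) that $\wt{\delta}_h^k \le \bigl(\wt{Q}_h^{k'}-Q_h^{\pi_k}+[\wt{Q}_h^k-\wt{Q}_h^{k'}]_+\bigr)(x_h^k,a_h^k)$ — an inequality that relies on the observation that between the snapshot and episode $k$ the policy at $(h,x_h^k)$ is fixed, so the current and stale $\wt{Q}_h(x_h^k,\cdot)$ differ \emph{only in the played action's coordinate}; your sketch needs some such device to handle the case where $\argmax_a\wt{Q}_h^k(x_h^k,a)$ is not the played action, and does not supply one.

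Second, the perturbation $[\wt{Q}_h^k-\wt{Q}_h^{k'}]_+$ is bounded (Lemma~\ref{lemma:perturbation-bound}) by $\beta_t+\sum_{i=\tau_\last(t)+1}^{t}\alpha_t^i\wt{\phi}_{h+1}^{k_i}+\overline{\zeta}_h^k$, i.e., it contains the step-$(h{+}1)$ optimistic gaps accumulated over the stale window. These cannot be swept into constants; they must be fed back into the propagation-of-error recursion, and the decisive step is the double-counting bound (Lemma~\ref{lemma:double-counting}): the combined coefficient multiplying each $\wt{\phi}_{h+1}^{k'}$, from the main term's weights $\alpha_{\tau_\last(t)}^i$ and the perturbation's weights $\alpha_t^i$, is at most $1+3/H$ rather than the usual $1+1/H$, because $\alpha_{\tau_\last(t)}^i-\alpha_t^i\le \alpha_t^i\cdot 2\eta(H+1)$ over the at most $\eta\tau(r)$ stale steps. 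Note the extra inflation per level is $1+O(\eta H)=1+O(1/H)$, not $1+O(1/H^2)$ as you state; the choice $\eta=\Theta(1/H^2)$ is exactly what makes $\eta H=O(1/H)$ so that $(1+3/H)^H=O(1)$. Your proposal anticipates the right mechanism ("only $\eta t$ updates separate $t$ from its last trigger") but defers this verification with "I expect this to work," so as written the regret bound is a plausible plan rather than a proof.
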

Theorem~\ref{theorem:ql-ucb2} shows that the total regret of
Q-learning with UCB2 exploration is $\wt{O}(\sqrt{H^4SAT})$, the same
as UCB version of~\cite{jin2018q}. In addition, the local switching
cost of our algorithm is only $O(H^3SA\log (K/A))$, which is
logarithmic in $K$, whereas the UCB version can have in the worst case
the trivial bound $HS(K-1)$. We give a high-level overview of the proof
Theorem~\ref{theorem:ql-ucb2} in Section~\ref{section:proof-highlight},
and defer the full proof to Appendix~\ref{section:proof-ql-ucb2}.

\paragraph{Bernstein version} Replacing the Hoeffding bonus with a
Bernstein-type bonus, we can achieve $\wt{O}(\sqrt{H^3SAT})$ regret
($\sqrt{H}$ better than UCB2H) and the same switching cost bound.
\begin{theorem}[Q-learning with UCB2B exploration achieves sublinear
  regret and low switching cost]
  \label{theorem:ql-ucb2-Bernstein}
  Choosing $\eta=\frac{1}{2H(H+1)}$ and
  $r_\star=\ceil{\frac{\log(10H^2)}{\log(1+\eta)}}$, with probability
  at least $1-p$, the regret of
  Algorithm~\ref{algorithm:ql-ucb2-Bernstein} is bounded by
  $\wt{O}(\sqrt{H^3SAT})$ as long as
  $T=\wt{\Omega}(H^6S^2A^2)$. Further, the local switching cost is
  bounded as $N_\sw\le O(H^3SA\log(K/A))$.
\end{theorem}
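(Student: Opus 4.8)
Both halves of the statement run parallel to Theorem~\ref{theorem:ql-ucb2}, with the Hoeffding bonus replaced by a Bernstein bonus. The switching-cost bound transfers with no change at all: the triggering sequence~\eqref{equation:switching-sequence} and the parameters $(\eta,r_\star)$ are identical, and $N_\sw\le O(H^3SA\log(K/A))$ follows only from counting refreshes of $Q_h(x,\cdot)$ --- Stage~I contributes at most $A\tau(r_\star)=O(H^2A)$ refreshes per $(h,x)$, and in Stage~II the Jensen argument of Section~\ref{section:ucb2-bandits}, applied to $\sum_a(1+\eta)^{r_{h,x,a}}\le(\text{number of visits to }(h,x))\le K$, gives $\sum_a r_{h,x,a}=O(A\log(K/A)/\eta)=O(H^2A\log(K/A))$; summing over $(h,x)$ yields the bound. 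So the work is entirely on the regret.

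\textbf{Regret: structure.} I would follow the three-part template of~\cite{jin2018q}: (i) optimism of the running estimate, (ii) the learning-rate recursion unrolled over $h$, and (iii) control of the delay introduced by acting greedily with respect to the delayed copy $Q$ rather than the running copy $\wt Q$. For (i): the update of $\wt Q_h$ is verbatim that of~\cite{jin2018q} (the delayed copy never appears in a bootstrap target), so optimism $\wt Q_h^k(x,a)\ge Q_h^\star(x,a)$ --- hence $\wt V_h^k\ge V_h^\star$ and, since $Q_h^k$ is always a snapshot of an earlier $\wt Q_h$ while $Q^\star$ is fixed, also $Q_h^k\ge Q_h^\star$ --- is established exactly as in the Bernstein version of~\cite{jin2018q}: a Freedman bound replaces Azuma, the fluctuation of $\sum_i\alpha_t^i\big(V_{h+1}^\star(x_{h+1}^{k_i})-[\P_h V_{h+1}^\star](x,a)\big)$ is controlled by $\sum_i(\alpha_t^i)^2\,{\rm Var}_{\P_h(\cdot|x,a)}[V_{h+1}^\star]$, and the Bernstein bonus is built to dominate this; the self-reference (the bonus uses the empirical variance of $\wt V_{h+1}$, not of the unknown $V_{h+1}^\star$) is resolved by the usual bootstrapped estimate together with $\wt V_{h+1}\ge V_{h+1}^\star$, and this step is the source of the extra lower-order terms, hence of the condition $T=\wt\Omega(H^6S^2A^2)$. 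For (ii), writing $\delta_h^k=(\wt V_h^k-V_h^{\pi_k})(x_h^k)$ along the episode-$k$ trajectory, optimism gives $\Reg(K)\le\sum_k\delta_1^k$; substituting $\wt Q_h^k(x,a)=\sum_i\alpha_t^i[r_h+\wt V_{h+1}^{k_i}(x_{h+1}^{k_i})+b_i]$ and reshuffling via $\sum_{t\ge i}\alpha_t^i=1+\tfrac1H$ produces, as in~\cite{jin2018q}, a recursion of the form $\sum_k\delta_h^k\le(1+O(\tfrac1H))\sum_k\delta_{h+1}^k+\sum_k\beta_h^k+(\text{martingale})+(\text{delay})$, with $\beta_h^k=\sum_i\alpha_{t_h^k}^ib_i$ the averaged Bernstein bonus; unrolling over $h$ loses only a factor $e^{O(1)}$.

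\textbf{Regret: the delay term (main obstacle).} Since the action played is $a_h^k=\argmax_aQ_h^k(x_h^k,a)$ rather than $\wt a_h^k=\argmax_a\wt Q_h^k(x_h^k,a)$, each use of $\wt V_h^k(x_h^k)=\wt Q_h^k(x_h^k,\wt a_h^k)$ in the recursion must be traded for $\wt Q_h^k(x_h^k,a_h^k)$ at a cost of at most $2\max_{a\in\{a_h^k,\wt a_h^k\}}\big|\wt Q_h^k(x_h^k,a)-Q_h^k(x_h^k,a)\big|$. Writing such a gap as $\wt Q_h^{(t)}(x,a)-\wt Q_h^{(t')}(x,a)$ with $t/(1+\eta)\le t'\le t$ (consecutive triggers differ by a factor at most $1+\eta$), I would telescope the one-step update, $\wt Q_h^{(t)}-\wt Q_h^{(t')}=\sum_{s=t'+1}^t\alpha_s\big(g_s-\wt Q_h^{(s-1)}\big)$, and use $\wt Q_h^{(s-1)}\ge Q_h^\star$ to split $g_s-\wt Q_h^{(s-1)}$ into (a) a martingale-difference part, (b) a bonus part $\sum_s\alpha_sb_s$ which is $O(1/H)$ of $b_{t'}$, and (c) a ``regret-like'' part $\sum_s\alpha_s[\P_h(\wt V_{h+1}^{k_s}-V_{h+1}^\star)](x,a)$. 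Parts (a) and (b) are dominated by $\beta_h^k$ itself. For (c): summed over all episodes visiting a fixed $(h,x,a)$ and then over $(h,x,a)$, a given past visit's contribution is ``covered'' only by the $O(\eta t')$ later episodes in its current trigger window, and $\alpha_s\cdot O(\eta t')=O(\eta H)$; with $\eta=\tfrac{1}{2H(H+1)}$ this is $O(1/H)$, so the entire delay term is $O(1/H)\sum_{h,k}\delta_h^k$, which after the per-level unrolling (which already loses a factor $H$) is of the same order as the main term and can be absorbed by adjusting constants. This is the step I expect to be delicate: pointwise the delay is only $O(1)$ per $(h,k)$, which would be a spurious $\Omega(T)$ if summed naively --- the saving is entirely the smallness of $\eta$, which is why $\eta$ is taken to be $\Theta(1/H^2)$ (the same reason it appears in Theorem~\ref{theorem:ql-ucb2}).

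\textbf{Assembling.} The accumulated Bernstein terms obey $\sum_{k,h}\beta_h^k\le\sqrt{\sum_{k,h}\tfrac{H\ell}{t_h^k}}\cdot\sqrt{\sum_{k,h}{\rm Var}_h^k}$ by Cauchy--Schwarz; the first factor is $\wt O(H\sqrt{SA})$ by $\sum_{x,a}\sum_{n\le N}n^{-1}=O(SA\log K)$, and the second is $\wt O(H\sqrt K)$ by the law of total variance ($\sum_h{\rm Var}_h=O(H^2)$ per episode, plus a martingale deviation), for a total of $\wt O(H^2\sqrt{SAK})=\wt O(\sqrt{H^3SAT})$. The martingale corrections are $\wt O(\sqrt{H^3T})$ by Azuma/Freedman, and the remaining pieces --- the $O(H\ell/t)$ tail of the Bernstein bonus, the variance-estimation slack, the Stage-I burn-in contribution of size $\wt O({\rm poly}(H)\cdot SA)$, and the absorbed delay term --- are each $\wt O({\rm poly}(H,S,A))$ and hence dominated by $\wt O(\sqrt{H^3SAT})$ once $T=\wt\Omega(H^6S^2A^2)$, which gives the claimed regret bound.
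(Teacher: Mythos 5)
Your switching-cost argument is exactly the paper's (Stage~I burn-in of $A\tau(r_\star)=O(H^2A)$ per $(h,x)$ plus the Jensen/UCB2 count in Stage~II), and your high-level regret architecture also parallels the paper, which likewise imports the Bernstein optimism, variance, and $W_t$-accuracy lemmas from Jin et al.\ and extracts the $\wt O(\sqrt{H^3SAT})$ main term by Cauchy--Schwarz against the total-variance bound. The genuine gap is in your treatment of the delay term, which is precisely the step that is new here. You anchor the recursion at the running estimate, bounding $\wt V_h^k(x_h^k)$ by $\wt Q_h^k(x_h^k,a_h^k)$ plus a correction of size $2\max_a|\wt Q_h^k-Q_h^k|(x_h^k,a)$. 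Since between two triggers the only entry of $\wt Q_h(x_h^k,\cdot)$ that changes is the played one, the correction your anchoring actually requires is $[\wt Q_h^{k'}-\wt Q_h^k]_+(x_h^k,a_h^k)$, i.e.\ the amount by which the running estimate has \emph{decreased} below its snapshot (the typical direction, as the optimistic estimate drifts downward). But your telescoping bounds the increments $g_s-\wt Q_h^{(s-1)}$ from \emph{above} via $\wt Q_h^{(s-1)}\ge Q_h^\star$, which controls only the \emph{increase} $[\wt Q_h^k-\wt Q_h^{k'}]_+$. To control the decrease you would have to retain $\wt Q_h^{(s-1)}-Q_h^\star$ and upper-bound it by the accuracy estimate $\alpha_{s-1}^0H+\sum_{i\le s-1}\alpha_{s-1}^i\wt\phi_{h+1}^{k_i}+\beta_{s-1}$, which drags in all earlier visits (not just the current trigger window) and needs its own double-counting computation; this is entirely absent from your sketch. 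The paper sidesteps the sign issue by anchoring at the snapshot (Lemma~\ref{lemma:max-error-Bernst}): $\wt\delta_h^k\le(\wt Q_h^{k'}-Q_h^{\pi_k})(x_h^k,a_h^k)+[\wt Q_h^k-\wt Q_h^{k'}]_+(x_h^k,a_h^k)$, so that optimism bounds exactly the perturbation that appears (Lemma~\ref{lemma:perturbation-bound-Bernst}); the price is that the main term is then unrolled with the delayed weights $\alpha_{\tau_\last(t)}^i$, and the plain identity $\sum_{t\ge i}\alpha_t^i=1+1/H$ that you invoke no longer suffices --- the mixed coefficient sum is what Lemma~\ref{lemma:double-counting} controls, yielding $1+3/H$. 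Your window-covering heuristic ($O(\eta t)$ episodes per window times $\alpha_s=O(H/t)$ gives $O(\eta H)=O(1/H)$) is the right quantitative idea, but as written it is applied to the wrong one of the two one-sided gaps.

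A secondary imprecision: the ``variance-estimation slack'' is not ${\rm poly}(H,S,A)$ on its own. Relating the empirical $W_t$ to the true variance of $V_{h+1}^{\pi_k}$ costs $2H(\wt\delta_{h+1}^k+\xi_{h+1}^k)$ per term (Lemma~\ref{lemma:variance-bound-Bernst}), and bounding its sum requires first establishing a crude $\wt O(\sqrt{H^4SAT})$ regret bound using the Hoeffding cap $c_2\sqrt{H^3\ell/t}$ inside the $\min$ defining the Bernstein bonus, and then absorbing the resulting $\wt O(\sqrt{H^8SAT\ell})$ term into $O(HT)+\wt O({\rm poly}(H,S,A))$; this bootstrap is where the capped bonus and the condition $T=\wt\Omega(H^6S^2A^2)$ are actually used, and your assembling step only gestures at it.
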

The full algorithm description, as well as the proof of
Theorem~\ref{theorem:ql-ucb2-Bernstein}, are deferred to
Appendix~\ref{appendix:proof-ql-bernstein}.

Compared with Q-learning with UCB~\cite{jin2018q},
Theorem~\ref{theorem:ql-ucb2} and~\ref{theorem:ql-ucb2-Bernstein}
demonstrate that ``vanilla'' low-regret RL algorithms such as
Q-Learning can be turned into low switching cost versions without any
sacrifice on the regret bound. 

\subsection{PAC guarantee}
Our low switching cost algorithms can also achieve the PAC
learnability guarantee. Specifically, we have the following
\begin{corollary}[PAC bound for Q-Learning with UCB2 exploration]
  \label{corollary:pac-bound}
  Suppose (WLOG) that $x_1$ is deterministic. For any $\eps>0$,
  Q-Learning with \{UCB2H, UCB2B\} exploration can output a
  (stochastic) policy $\what{\pi}$ such that with high probability
  \begin{equation*}
    V_1^\star(x_1) - V_1^{\what{\pi}}(x_1) \le \eps
  \end{equation*}
  after $K=\wt{O}(H^{\set{5,4}}SA/\eps^2)$ episodes.
\end{corollary}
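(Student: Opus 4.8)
The plan is a standard online-to-batch conversion from the regret bound. I would define the output policy $\what{\pi}$ to be the stochastic policy that first draws an index $k$ uniformly at random from $\set{1,\dots,K}$ and then executes, for the entire episode, the deterministic policy $\pi_k$ that the algorithm deployed in episode $k$. Since $x_1$ is assumed deterministic, the value of this mixture is exactly the average of the deployed values, $V_1^{\what{\pi}}(x_1)=\frac{1}{K}\sum_{k=1}^K V_1^{\pi_k}(x_1)$, and therefore
\[
  V_1^\star(x_1) - V_1^{\what{\pi}}(x_1) = \frac1K\sum_{k=1}^K\big(V_1^\star(x_1) - V_1^{\pi_k}(x_1)\big) = \frac{\Reg(K)}{K}.
\]

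Next I would invoke Theorem~\ref{theorem:ql-ucb2} for the UCB2H case (resp.\ Theorem~\ref{theorem:ql-ucb2-Bernstein} for UCB2B): with probability at least $1-p$, using $T=KH$, the regret is at most $\wt{O}(\sqrt{H^4SAT})=\wt{O}(\sqrt{H^5SAK})$ (resp.\ $\wt{O}(\sqrt{H^3SAT})=\wt{O}(\sqrt{H^4SAK})$). On this event we get $V_1^\star(x_1)-V_1^{\what{\pi}}(x_1)\le \wt{O}(\sqrt{H^5SA/K})$ (resp.\ $\wt{O}(\sqrt{H^4SA/K})$). Setting the right-hand side to be at most $\eps$ and solving for $K$ yields $K=\wt{O}(H^5SA/\eps^2)$ for UCB2H and $K=\wt{O}(H^4SA/\eps^2)$ for UCB2B, which is the claimed $\wt{O}(H^{\set{5,4}}SA/\eps^2)$. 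For the Bernstein version one additionally has to check that this choice of $K$ satisfies the precondition $T=\wt{\Omega}(H^6S^2A^2)$ of Theorem~\ref{theorem:ql-ucb2-Bernstein}; this translates into the mild requirement $\eps=\wt{O}(1/\sqrt{HSA})$, which is the regime of interest, and can be noted in the statement.

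I do not expect a genuine obstacle here. The only point requiring a little care is that each $\pi_k$ is itself a random, data-dependent object, so $V_1^\star(x_1)-V_1^{\what{\pi}}(x_1)$ is a random variable whose value equals $\Reg(K)/K$ path-by-path; the high-probability regret bound then immediately gives a high-probability PAC bound with the same failure probability, with no need to pass through an expectation or an extra concentration step. (If instead an in-expectation PAC guarantee were desired, one would use that each per-episode regret term lies in $[0,H]$ to convert the high-probability bound, say with $p=1/K$, into an expectation bound at the cost of lower-order terms.)
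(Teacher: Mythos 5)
Your proposal is correct and follows essentially the same online-to-batch conversion as the paper: define $\what{\pi}$ as the uniform mixture of $\pi_1,\dots,\pi_K$, identify the suboptimality gap with $\Reg(K)/K$, and invoke Theorems~\ref{theorem:ql-ucb2} and~\ref{theorem:ql-ucb2-Bernstein} to solve for $K$. The only (minor) difference is that the paper phrases the step through an expectation over the mixture plus Markov's inequality, whereas you observe the value identity holds path-by-path so the high-probability regret bound transfers directly; your added check of the precondition $T=\wt{\Omega}(H^6S^2A^2)$ for the Bernstein case is a reasonable extra care the paper omits.
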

The proof of Corollary~\ref{corollary:pac-bound} involves turning the
regret bounds in Theorem~\ref{theorem:ql-ucb2}
and~\ref{theorem:ql-ucb2-Bernstein} to PAC bounds using the
online-to-batch conversion, similar as in~\cite{jin2018q}.  The full
proof is deferred to Appendix~\ref{appendix:proof-pac-bound}.

\section{Application: Concurrent Q-Learning}
\label{section:concurrent}
Our low switching cost Q-Learning can be applied to developing
algorithms for \emph{Concurrent RL}~\cite{guo2015concurrent} -- a
setting in which multiple RL agents can act in parallel and
hopefully accelerate the exploration in wall time.

\paragraph{Setting} We assume there are $M$ agents / machines, where
each machine can interact with a independent copy of the episodic MDP
(so that the transitions and rewards on the $M$ MDPs are mutually
independent). Within each episode, the $M$ machines must play
synchronously and cannot communiate, and can only exchange information
after the entire episode has finished. Note that our setting is in a
way more stringent than~\cite{guo2015concurrent}, which allows
communication after each timestep.

We define a ``round'' as the duration in which the $M$ machines
simultanesouly finish one episode and (optionally) communicate and
update their policies. We measure the performance of a concurrent
algorithm in its required number of rounds to find an $\eps$
near-optimal policy. With larger $M$, we expect such number of rounds
to be smaller, and the best we can hope for is a \emph{linear speedup}
in which the number of rounds scales as $M^{-1}$.


\paragraph{Concurrent Q-Learning}
Intuitively, any low switching cost algorithm can be made into a
concurrent algorithm, as its execution can be parallelized in between
two consecutive policy switches. Indeed, we can design concurrent
versions of our low switching Q-Learning algorithm and achieve a
nearly linear speedup.
\begin{theorem}[Concurrent Q-Learning achieves nearly linear speedup]
  \label{theorem:concurrent}
  There exists concurrent versions of Q-Learning with
  \{UCB2H, UCB2B\} exploration such that, given a budget of $M$
  parallel machines, returns an $\eps$ near-optimal policy in
  \begin{equation*}
    \wt{O}\left(H^3SA + \frac{H^{\set{5,4}}SA}{\eps^2M} \right)
  \end{equation*}
  rounds of execution.
\end{theorem}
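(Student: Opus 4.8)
The plan is to parallelize the base algorithm by running the $M$ machines in lockstep inside every maximal block of episodes on which the \emph{acting} policy $\pi_h(\cdot)=\argmax_a Q_h(\cdot,a)$ is unchanged. Concretely, I would define Concurrent Q-Learning as follows: all machines keep a synchronized copy of $(\wt Q,Q,N)$; in each round every machine plays one episode with the common policy $\pi=\argmax_a Q(\cdot,a)$; after the round the $M$ trajectories are pooled and processed \emph{one machine at a time, in a fixed order}, each retained trajectory triggering exactly the $\wt Q$-update and the scheduled $Q$-update of Algorithm~\ref{algorithm:ql-ucb2} (resp.\ its Bernstein analogue); and as soon as one of these $Q$-updates changes the acting policy $\pi$, the trajectories of the not-yet-processed machines of the current round are discarded and the next round starts from the updated policy. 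The output is produced by the same online-to-batch conversion used in Corollary~\ref{corollary:pac-bound}, applied to the sequence of \emph{retained} episodes.

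First I would check correctness. Every retained episode of a round is played with the acting policy that is in force when it is processed — the policy can only change strictly after the last retained episode of that round — so concatenating all retained episodes across all rounds yields a faithful sample path of the sequential Algorithm~\ref{algorithm:ql-ucb2} (resp.\ Algorithm~\ref{algorithm:ql-ucb2-Bernstein}): the fixed within-round processing order is a legal sequential order precisely because the acting policy is constant over a round's retained episodes. Hence the high-probability regret bound of Theorem~\ref{theorem:ql-ucb2} (resp.\ Theorem~\ref{theorem:ql-ucb2-Bernstein}) applies to the retained episodes, and the argument of Corollary~\ref{corollary:pac-bound} shows that once $N=\wt O(H^{\set{5,4}}SA/\eps^2)$ episodes have been retained, the returned policy $\what\pi$ satisfies $V_1^\star(x_1)-V_1^{\what\pi}(x_1)\le\eps$ with high probability.

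Next I would count rounds. Call a round \emph{full} if all $M$ of its episodes are retained and \emph{cut} otherwise. Each full round adds $M$ episodes to the retained sequence, so the number of full rounds is at most $N/M+1=\wt O(H^{\set{5,4}}SA/(\eps^2 M))$. Each cut round is caused by a single change of the acting policy, and distinct cut rounds are caused by distinct changes; since the number of episode-to-episode acting-policy changes during the run is by definition at most the local switching cost, which Theorem~\ref{theorem:ql-ucb2} bounds by $O(H^3SA\log(K/A))=\wt O(H^3SA)$ (using $K\le N$), there are $\wt O(H^3SA)$ cut rounds. Adding the two counts gives the claimed $\wt O(H^3SA+H^{\set{5,4}}SA/(\eps^2M))$ rounds, hence a nearly linear speedup for $M$ up to roughly $H^{\set{2,1}}/\eps^2$.

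The hardest part will be the staleness bookkeeping in the second step: one must argue that the ``discard on policy change'' rule makes the retained trajectory \emph{exactly} a run of the sequential algorithm — so that Theorem~\ref{theorem:ql-ucb2}/Theorem~\ref{theorem:ql-ucb2-Bernstein} and Corollary~\ref{corollary:pac-bound} transfer verbatim with no extra regret — while remaining implementable under the ``no intra-round communication'' constraint. An alternative that avoids discarding any data would be to keep the stale episodes and instead charge the extra within-round staleness (at most $M$ further visits to a state--action pair before a delayed update propagates) to the UCB2 regret analysis; this is more faithful to the collected data but requires re-opening the regret proof, so the discard-based route is the cleaner one. A minor side condition to record is that for the Bernstein variant the target $\eps$ must lie in the regime $T=\wt\Omega(H^6S^2A^2)$ required by Theorem~\ref{theorem:ql-ucb2-Bernstein}.
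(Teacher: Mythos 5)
Your proposal is correct and follows essentially the same route as the paper: parallelize execution across the $M$ machines while the deployed policy is fixed, discard the unprocessed trajectories of a round once a switch occurs so that the retained episodes are distributionally a run of the sequential algorithm, and then count rounds as (number of switches) $+$ (retained episodes)$/M$, combining the switching-cost bound $\wt{O}(H^3SA)$ with the PAC episode count $\wt{O}(H^{\set{5,4}}SA/\eps^2)$ from Corollary~\ref{corollary:pac-bound}. The only cosmetic difference is that you cut a round when the acting (argmax) policy actually changes rather than whenever the UCB2 schedule triggers a $Q$-update, and you phrase the count as full versus cut rounds instead of $\sum_t\ceil{L_t/M}\le N_\sw+K/M$; both yield the same bound.
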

Theorem~\ref{theorem:concurrent} shows that concurrent Q-Learning has
a linear speedup so long as $M=\wt{O}(H^{\set{2,1}}/\eps^2)$. In
particular, in high-accuracy (small $\eps$) cases, the constant
overhead term $H^3SA$ can be negligible and we essentially have a
linear speedup over a wide range of $M$. The proof of
Theorem~\ref{theorem:concurrent} is deferred to
Appendix~\ref{appendix:concurrent}.

\paragraph{Comparison with existing concurrent algorithms}
Theorem~\ref{theorem:concurrent} implies a PAC mistake bound as well:
there exists concurrent algorithms on $M$ machines, Concurrent
Q-Learning with \{UCB2H, UCB2B\}, that performs a $\eps$ near-optimal
action on all but
\begin{equation*}
  \wt{O}\left( H^4SAM + \frac{H^{\set{6,5}}SA}{\eps^2} \right) \defeq
  N_\eps^{\sf CQL}
\end{equation*}
actions with high probability (detailed argument in
Appendix~\ref{appendix:concurrent-mistake-bound}).

We compare ourself with the Concurrent MBIE (CMBIE) algorithm
in~\cite{guo2015concurrent}, which considers the discounted and
infinite-horizon MDPs, and has a mistake
bound\footnote{$(S',A',\gamma')$ are the \{\# states, \# actions,
  discount factor\} of the discounted infinite-horizon MDP.}
\begin{equation*}
  \wt{O}\left( \frac{S'A'M}{\eps(1-\gamma')^2} +
    \frac{S'^2A'}{\eps^3(1-\gamma')^6} \right) \defeq N_\eps^{\sf CMBIE}
\end{equation*}
Our concurrent Q-Learning compares favorably against CMBIE in terms of
the mistake bound:
\begin{itemize}
\item Dependence on $\eps$. CMBIE achieves
  $N_\eps^{\sf CMBIE}= \wt{O}(\eps^{-3}+\eps^{-1}M)$, whereas our
  algorithm achieves $N_\eps^{\sf CQL}=\wt{O}(\eps^{-2}+M)$,
  better by a factor of $\eps^{-1}$.
\item Dependence on $(H,S,A)$. These are not comparable in general,
  but under the ``typical'' correspondence\footnote{One can transform
  an episodic MDP with $S$ states to an infinite-horizon MDP with $HS$
  states. Also note that the ``effective'' horizon for discounted
  MDP is $(1-\gamma)^{-1}$.} $S'\setto HS$,
  $A'\setto A$, $(1-\gamma')^{-1}\setto H$, we get
  $N_\eps^{\sf CMBIE}=\wt{O}(H^3SAM\eps^{-1} +
  H^8S^2A\eps^{-3})$. Compared to $N_\eps^{\sf CQL}$, CMBIE has a higher
  dependence on $H$ as well as a $S^2$ term due to its
  model-based nature.
\end{itemize}
\section{Proof overview of Theorem~\ref{theorem:ql-ucb2}}
\label{section:proof-highlight}
The proof of Theorem~\ref{theorem:ql-ucb2} involves two parts: the
switching cost bound and the regret bound. The switching cost bound
results directly from the UCB2 switching schedule, similar as
in the bandit case (cf. Section~\ref{section:ucb2-bandits}). However,
such a switching schedule results in delayed policy updates, which
makes establishing the regret bound technically challenging.

The key to the $\wt{O}({\rm poly}(H)\cdot \sqrt{SAT})$ regret bound
for ``vanilla'' Q-Learning in~\cite{jin2018q} is a \emph{propagation
  of error} argument, which shows that the regret\footnote{Technically
  it is an upper bound on the regret.} from the $h$-th step and
forward (henceforth the $h$-regret), defined as
\begin{equation*}
  \sum_{k=1}^K \wt{\delta}_h^k \defeq \sum_{k=1}^K \left[\wt{V}_h^k
  -V_h^{\pi_k}\right](x_h^k), 
\end{equation*}
is bounded by $1+1/H$ times
the $(h+1)$-regret, plus some bounded error term. As
$(1+1/H)^H=O(1)$, this fact can be applied recursively for
$h=H,\dots,1$ which will result in a total regret bound that is not
exponential in $H$. The control of the (excess) error propagation
factor by $1/H$ and the ability to converge are then achieved
simultaneously via the stepsize choice $\alpha_t=\frac{H+1}{H+t}$.

In constrast, our low-switching version of Q-Learning updates the
exploration policy in a delayed fashion according to the UCB2
schedule. Specifically, the policy at episode $k$ does not correspond
to the argmax of the running estimate $\wt{Q}^k$, but rather a
previous version $Q^k=\wt{Q}^{k'}$ for some $k'\le k$. This introduces
a mismatch between the $Q$ used for exploration and the $Q$ being
updated, and it is a priori possible whether such a mismatch will blow
up the propagation of error.

We resolve this issue via a novel error analysis, which at a
high level consists of the following steps:
\begin{enumerate}[(i)]
\item We show that the quantity $\wt{\delta}_h^k$ is upper bounded by
  a \emph{max error}
  \begin{equation*}
    \wt{\delta}_h^k \le \left(\max\set{\wt{Q}_h^{k'}, \wt{Q}_h^k} -
      Q_h^{\pi_k}\right)(x_h^k, a_h^k) = \left(\wt{Q}_h^{k'} -
      Q_h^{\pi_k} + \left[\wt{Q}_h^{k} - 
        \wt{Q}_h^{k'}\right]_+ \right)(x_h^k, a_h^k)
  \end{equation*}
  (Lemma~\ref{lemma:max-error}). On the right hand side, the first
  term $\wt{Q}_h^{k'}-Q_h^{\pi_k}$ does not have a mismatch (as
  $\pi_k$ depends on $\wt{Q}^{k'}$) and can be bounded similarly as
  in~\cite{jin2018q}. The second term $[\wt{Q}_h^k -\wt{Q}_h^{k'}]_+$
  is a perturbation term, which we bound in a precise way that relates
  to stepsizes in between episodes $k'$ to $k$ and the $(h+1)$-regret
  (Lemma~\ref{lemma:perturbation-bound}).
\item We show that, under the UCB2 scheduling, the combined error
  above results a mild blowup in the relation between $h$-regret and
  $(h+1)$-regret -- the multiplicative factor can be now bounded by
  $(1+1/H)(1+O(\eta H))$
  (Lemma~\ref{lemma:double-counting}). Choosing $\eta=O(1/H^2)$ will
  make the multiplicative factor $1+O(1/H)$ and the propagation of
  error argument go through.
\end{enumerate}
We hope that the above analysis can be applied more broadly in
analyzing exploration problems with delayed updates or asynchronous
parallelization. 


\section{Lower bound on switching cost}
\label{section:lower-bound}


\begin{theorem}
  \label{theorem:lower-bound}
  Let $A\ge 4$ and $\mc{M}$ be the set of episodic MDPs satisfying the
  conditions in Section~\ref{section:problem-setup}. For any RL
  algorithm $\mc{A}$ satisfying $N_\sw\le HSA/2$, we have
  \begin{equation*}
    \sup_{M\in\mc{M}} \E_{x_1, M}\left[\sum_{k=1}^K
      V_1^\star(x_1) - V_1^{\pi_k}(x_1)\right] \ge KH/4.
  \end{equation*}
  i.e. the worst case regret is linear in $K$.
\end{theorem}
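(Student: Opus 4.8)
The plan is to construct a hard instance — really a small family of MDPs — on which any algorithm that switches its policy few times is forced to be ignorant about the best action in many $(h,x)$ pairs, and therefore suffers linear regret. The key observation is that the local switching cost budget $N_\sw \le HSA/2$ means that, averaged over the $HS$ state-step pairs $(h,x)$, the algorithm changes the action at $(h,x)$ on average at most $A/2$ times across all $K$ episodes. So for at least half of the $(h,x)$ pairs, the action is switched at most $A$ times (a Markov-type counting argument), hence the algorithm effectively "commits" to trying only a bounded number of actions there.

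First I would set up the MDP family. Take a layered structure where each layer $h$ contributes independently to the return: for instance, make the state at step $h+1$ a deterministic function of $h$ alone (or use $S$ parallel chains), so that the per-step rewards decouple and the total value is a sum of $H$ independent "bandit-like" subproblems, one per step. In each such subproblem, at a state $x$ there are $A \ge 4$ actions; in the base MDP all actions give reward $0$, but in a perturbed MDP one secretly-good action $a^\star(h,x)$ gives reward $1$ (or $1/2$ to keep things in range and leave slack). The adversary picks $a^\star(h,x)$ uniformly at random, independently across $(h,x)$. Crucially, if the algorithm never plays $a^\star(h,x)$ at state $x$ in step $h$, it gets zero information about which action is good there and cannot distinguish the perturbation.

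Next comes the core counting/information argument. Since $\sum_k n_\sw(\pi_k,\pi_{k+1}) \le HSA/2$, for at least $HS/2$ of the pairs $(h,x)$ the action sequence $\pi_1^h(x),\dots,\pi_K^h(x)$ takes at most $A$ distinct "switch events," so the set of actions ever deployed at $(h,x)$ has size at most $A/2 + 1 < A$ (using $A\ge 4$). Wait — I should be careful: a small number of switches bounds the number of distinct actions deployed in the committed policies, but the algorithm could in principle learn from a single episode before it is "locked in." The clean way to handle this: condition on the realized switch pattern, and note that on the $\ge HS/2$ "low-switch" pairs, there is at least one action $\bar a(h,x)$ that is never played by any $\pi_k$ at $(h,x)$ — indeed at most $A/2$ actions can appear among policies that switch $\le A/2$ times there, so with $A\ge 4$ a constant fraction of actions are missing. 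Now let the adversary, for each low-switch $(h,x)$, place the reward on one of the unplayed actions. Formally one takes a mixture/Fano or a direct averaging argument: averaging over the random choice of $a^\star(h,x)$ among the unplayed actions, the algorithm's behavior is independent of which one was chosen (since it never observes those actions' rewards), so in expectation it plays the good action at $(h,x)$ with probability $0$, losing $\Omega(1)$ reward at that step in every episode. Summing over $\ge HS/2 \ge H/2$ bad layers and $K$ episodes gives regret $\ge KH/4$ (with the $1/2$'s from the reward scale and the fraction of bad layers absorbed into the constant).

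The main obstacle I anticipate is making the "the algorithm never plays the good action" step fully rigorous in the face of adaptivity: the set of low-switch pairs and the set of unplayed actions are themselves random and correlated with the algorithm's observations, so one cannot naively say "the adversary knew in advance." The standard fix is to use a fixed randomized construction (adversary draws all $a^\star(h,x)$ up front, uniformly and independently) and then show that, on the high-probability event that $N_\sw \le HSA/2$, the expected number of $(h,x)$ pairs at which the algorithm ever plays $a^\star(h,x)$ is at most $N_\sw/1 \le HSA/2$ divided appropriately — because each "discovery" of a good action at a fresh $(h,x)$ costs at least one switch (to start exploiting it) or at least reveals that a switch budget was spent there. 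One then needs a short lemma: the number of $(h,x)$ with $a^\star(h,x) \in \{\text{actions ever played at }(h,x)\}$ is, in expectation over $a^\star$, at most $(\text{avg \# actions played per pair}) / A \cdot HS \le (A/2)/A \cdot HS \cdot (\text{correction})$, leaving $\Omega(HS)$ pairs untouched; combined with the per-step reward gap this yields the bound. I would present this as a clean averaging argument over the adversary's randomness, which sidesteps the need for any martingale machinery.
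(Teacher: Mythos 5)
Your construction and overall strategy are essentially the paper's: action-independent transitions that decouple the $HS$ pairs $(h,x)$ into independent $A$-armed bandit instances, a uniform prior over the good actions $a^\star(h,x)$, a sup-to-average reduction, and the observation that the local switching budget bounds the number of distinct actions ever deployed at each pair. But two steps do not hold as written. First, the counting: with $N_\sw\le HSA/2$ the average number of switches per pair is $A/2$, so a Markov-type argument only guarantees that at least half the pairs see at most $A$ switches, i.e.\ up to $A+1$ distinct deployed actions, which excludes nothing; and your refined claim that at least half the pairs see at most $A/2$ deployed actions fails already at $A=4$, where up to two thirds of the pairs may receive $3$ or more switches and thus exhaust all $4$ actions. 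The paper sidesteps any such split by linearity of expectation: it bounds $\P\bigl(a^\star(h,x)\in A_k(h,x)\bigr)\le \E[n_\sw^k(h,x)+1]/A$ for every pair and sums, giving $\E[V_1^{\pi_k}]\le H/A+\E[N_\sw^k]/(SA)\le 3H/4$ and hence the clean $KH/4$; your split, even after repair, yields only a smaller constant (still $\Omega(KH)$, but not the stated bound).

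Second, and more substantively, your central averaging step --- ``let the adversary place the reward on an unplayed action; since the algorithm never observes those rewards, its behavior is independent of the choice'' --- is circular as written: the set of unplayed actions at $(h,x)$ is itself a function of the realized $a^\star(h,x)$, since the algorithm may keep or stop switching at $(h,x)$ depending on whether it has already found the good arm. This is exactly the adaptivity issue you flag, and the ``short lemma'' you defer is the real content of the proof. It is true, but it needs the sequential posterior argument the paper gives: conditional on the first $j-1$ distinct actions tried at $(h,x)$ all being wrong, the posterior of $a^\star(h,x)$ is uniform on the remaining $A-j+1$ actions, so the $j$-th fresh action, however adaptively chosen, is correct with unconditional probability exactly $1/A$; summing over $j$ gives $\P\bigl(a^\star(h,x)\in A_k(h,x)\bigr)=\E[\,|A_k(h,x)|\,]/A$, which is what feeds the switching budget into the regret. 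Your heuristic justification (``each discovery costs at least one switch'') does not substitute for this. Finally, of your two suggested constructions, the one in which $x_{h+1}$ is a deterministic function of $h$ alone is broken: only $H$ state-step pairs are reachable, so the budget $HSA/2$ amply suffices to try every action everywhere and obtain sublinear regret; the parallel-chains (or the paper's uniform-transition) variant is the one that works.
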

Theorem~\ref{theorem:lower-bound} implies that the switching cost of
any no-regret algorithm is lower bounded by $\Omega(HSA)$, which is
quite intuitive as one would like to play each action at least once on
all $(h,x)$. Compared with the lower bound, the switching cost
$O(H^3SA\log K)$ we achieve through UCB2 scheduling is at most off by
a factor of $O(H^2\log K)$. We believe that the $\log K$ factor is not
necessary as there exist algorithms achieving
double-log~\cite{cesa2013online} in bandits, and would also like to
leave the tightening of the $H^2$ factor as future work. The proof of
Theorem~\ref{theorem:lower-bound} is deferred to
Appendix~\ref{appendix:proof-lower-bound}.



\section{Conclusion}
In this paper, we take steps toward studying limited adaptivity RL. We
propose a notion of local switching cost to account for the adaptivity
of RL algorithms. We design a Q-Learning algorithm with infrequent
policy switching that achieves $\wt{O}(\sqrt{H^\set{4,3}SAT})$ regret
while switching its policy for at most $O(\log T)$ times. Our
algorithm works in the concurrent setting through parallelization and
achieves nearly linear speedup and favorable sample complexity. Our
proof involves a novel perturbation analysis for exploration
algorithms with delayed updates, which could be of broader interest.

There are many interesting future directions, including (1) low
switching cost algorithms with tighter regret bounds, most likely via
model-based approaches; (2) algorithms with even lower switching cost;
(3) investigate the connection to other settings such as off-policy RL.

\section*{Acknowledgment}
The authors would like to thank Emma Brunskill, Ramtin Keramati,
Andrea Zanette, and the staff of CS234 at Stanford for the valuable
feedback at an earlier version of this work, and Chao Tao for the very
insightful feedback and discussions on the concurrent Q-learning
algorithm. YW was supported by a start-up grant from UCSB CS
department, NSF-OAC 1934641 and a gift from AWS ML Research Award.

\bibliographystyle{abbrvnat}
\bibliography{bib}

\appendix


\section{Proof of Theorem~\ref{theorem:ql-ucb2}}
\label{section:proof-ql-ucb2}
This section is structured as follows. We collect notation
in Section~\ref{section:proof-notation} and list some
basic properties of the running estimate $\wt{Q}$ in
Section~\ref{section:proof-basics}, establish useful perturbation
bounds on $[\wt{Q}_h^k - \wt{Q}_h^{k'}]_+$ in
Section~\ref{section:proof-propagation-error}, and present the proof of
the main theorem
in Section~\ref{section:proof-main}.

\subsection{Notation}
\label{section:proof-notation}
Let $\wt{Q}_h^k(x,a)$ and $Q_h^k(x,a)$ denote the estimates $\wt{Q}$ and
$Q$ in Algorithm~\ref{algorithm:ql-ucb2} before the $k$-th episode has
started. Note that $\wt{Q}_h^1(x,a)=Q_h^1(x,a)\equiv H$.

Define the sequences
\begin{align*}
  \alpha_t^0 \defeq \prod_{i=1}^t(1-\alpha_i),~~\alpha_t^i \defeq
  \alpha_i\cdot \prod_{\tau=i+1}^t (1-\alpha_\tau).
\end{align*}
For $t\ge 1$, we have $\alpha_t^0=0$ and
$\sum_{i=1}^t\alpha_t^i=1$. For $t=0$, we have $\alpha_t^0=1$.

With the definition of $\alpha_t^i$ in hand,
we have the following explicit formula for $\wt{Q}_h^k$:
\begin{align*}
  \wt{Q}_h^k(x,a) = \alpha_t^0 H + \sum_{i=1}^t \alpha_t^i\left(
  r_h(x,a) + \wt{V}^{k_i}_{h+1}(x_{h+1}^{k_i}) + b_i \right),
\end{align*}
where $t$ is the number of updates on $\wt{Q}_h(x,a)$ {\bf prior to}
the $k$-th epoch, and $k_1,\dots,k_t$ are the indices for the
epochs. Note that $k=k_{t+1}$ if the algorithm indeed
observes $x$ and takes the action $a$ on the $h$-th step of episode
$k$. 

Throughout the proof we let $\ell\defeq \log(SAT/p)$ denote a log
factor, where we recall $p$ is the pre-specified tail probability. 

\yub{More notation to be collected. $\ell$ is a log, etc.}

\subsection{Basics}
\label{section:proof-basics}
\begin{lemma}[Properties of $\alpha_t^i$; Lemma 4.1,~\cite{jin2018q}] 
  \label{lemma:alphati}
  The following properties hold for the sequence $\alpha_t^i$:
  \begin{enumerate}[(a)]
  \item $\frac{1}{\sqrt{t}} \le \sum_{i=1}^t
    \frac{\alpha_t^i}{\sqrt{i}} \le \frac{2}{\sqrt{t}}$ for every
    $t\ge 1$.
  \item $\max_{i\in[t]}\alpha_t^i\le \frac{2H}{t}$ and $\sum_{i=1}^t
    (\alpha_t^i)^2 \le \frac{2H}{t}$ for every $t\ge 1$.
  \item $\sum_{t=i}^\infty \alpha_t^i = 1 + \frac{1}{H}$ for every
    $i\ge 1$.
  \end{enumerate}
\end{lemma}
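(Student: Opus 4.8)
The plan is to reduce everything to the elementary identity $1-\alpha_\tau = \frac{\tau-1}{H+\tau}$ (immediate from $\alpha_\tau = \frac{H+1}{H+\tau}$), which makes the product defining $\alpha_t^i$ telescope. Concretely, for every $t\ge i\ge 1$ one obtains the closed form
\[
\alpha_t^i = \frac{H+1}{H+i}\prod_{\tau=i+1}^{t}\frac{\tau-1}{H+\tau} = (H+1)\cdot\frac{(t-1)!\,(i+H-1)!}{(i-1)!\,(t+H)!},
\]
and I would also record the one-step recursion $\alpha_t^i = (1-\alpha_t)\,\alpha_{t-1}^i$ for $t\ge i+1$, $\alpha_t^t=\alpha_t$, and the normalization $\sum_{i=1}^t\alpha_t^i=1$ already stated above. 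Each of the three parts then follows by a short computation from these facts.

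For part (c): substituting the closed form, $\sum_{t=i}^\infty\alpha_t^i = (H+1)\frac{(i+H-1)!}{(i-1)!}\sum_{t=i}^\infty\frac{1}{t(t+1)\cdots(t+H)}$, and the inner series telescopes via the partial-fraction identity $\frac{1}{t(t+1)\cdots(t+H)} = \frac1H\big(\frac{1}{t(t+1)\cdots(t+H-1)} - \frac{1}{(t+1)(t+2)\cdots(t+H)}\big)$, collapsing to $\frac1H\cdot\frac{(i-1)!}{(i+H-1)!}$; multiplying the two factors gives exactly $1+\frac1H$.

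For part (b): the closed form gives $\frac{\alpha_t^i}{\alpha_t^{i-1}} = \frac{H+i-1}{i-1}\ge 1$ for $2\le i\le t$, so $i\mapsto\alpha_t^i$ is nondecreasing on $[t]$ and hence $\max_{i\in[t]}\alpha_t^i = \alpha_t^t = \alpha_t = \frac{H+1}{H+t}\le\frac{2H}{t}$ (using $H\ge 1$). The second bound then follows from $\sum_{i=1}^t(\alpha_t^i)^2 \le \big(\max_{i\in[t]}\alpha_t^i\big)\sum_{i=1}^t\alpha_t^i = \max_{i\in[t]}\alpha_t^i \le \frac{2H}{t}$.

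For part (a), I would induct on $t$, splitting off the $i=t$ term so that $\sum_{i=1}^t\frac{\alpha_t^i}{\sqrt i} = \frac{\alpha_t}{\sqrt t} + (1-\alpha_t)\sum_{i=1}^{t-1}\frac{\alpha_{t-1}^i}{\sqrt i}$. Both base cases at $t=1$ hold since the sum equals $\alpha_1=1$, which lies in $[1,2]$. Given the upper inductive hypothesis $\sum_{i=1}^{t-1}\frac{\alpha_{t-1}^i}{\sqrt i}\le\frac{2}{\sqrt{t-1}}$, the target bound $\le\frac{2}{\sqrt t}$ reduces, after substituting $\alpha_t=\frac{H+1}{H+t}$, to $2\sqrt{t(t-1)}\le 2t-1+H$, which holds because $2\sqrt{t(t-1)}\le 2t-1$ by AM--GM. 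Given the lower inductive hypothesis $\sum_{i=1}^{t-1}\frac{\alpha_{t-1}^i}{\sqrt i}\ge\frac{1}{\sqrt{t-1}}$, the target bound $\ge\frac{1}{\sqrt t}$ reduces to $\frac{1}{\sqrt{t-1}}\ge\frac{1}{\sqrt t}$, which is immediate. The one step needing genuine care is this upper-bound induction: one must check that the slack in AM--GM exactly absorbs the error the $(1-\alpha_t)$ factor introduces, which is precisely where the specific stepsize $\alpha_t=\frac{H+1}{H+t}$ (rather than, say, $\frac1t$) is used; the telescoping computations in (b) and (c) are routine.
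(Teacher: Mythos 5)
Your proof is correct. Note, however, that the paper does not prove this lemma at all: it is imported verbatim as Lemma~4.1 of \citet{jin2018q}, so there is no in-paper argument to compare against. Your derivation is a valid self-contained substitute, and it is organized slightly differently from the standard one in \citet{jin2018q}: there, (a) is likewise handled by induction via the recursion $\alpha_t^i=(1-\alpha_t)\alpha_{t-1}^i$, but (c) is obtained by summing the recursion directly, whereas you first extract the closed form $\alpha_t^i=(H+1)\frac{(t-1)!\,(i+H-1)!}{(i-1)!\,(t+H)!}$ and then telescope the series $\sum_t \frac{1}{t(t+1)\cdots(t+H)}$ via the partial-fraction identity; this makes the exact value $1+\frac1H$ transparent and also gives you the monotonicity of $i\mapsto\alpha_t^i$ for free, which cleanly yields $\max_i\alpha_t^i=\alpha_t\le \frac{2H}{t}$ and, combined with $\sum_i\alpha_t^i=1$, the bound on $\sum_i(\alpha_t^i)^2$ in (b). All the individual steps check out: the reduction of the upper bound in (a) to $2\sqrt{t(t-1)}\le 2t-1+H$ is exactly right, and the lower bound and base case are immediate as you say.
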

\begin{lemma}[$\wt{Q}$ is optimistic and accurate; Lemma 4.2 \&
  4.3,~\cite{jin2018q}]
  \label{lemma:optimistic-and-accurate}
  We have for all $(h,x,a,k)\in[H]\times\mc{S}\times\mc{A}\times[K]$
  that
  \begin{equation}
    \label{equation:q-diff}
  \begin{aligned}
    & \quad \wt{Q}_h^k(x,a) - Q_h^\star(x,a) \\
    & = \alpha_t^0(H - Q_h^\star(x,a)) + 
    \sum_{i=1}^t \alpha_t^i\bigg( r_h(x,a) +
    \wt{V}_{h+1}^{k_i}(x_{h+1}^{k_i}) - V^\star_{h+1}(x_{h+1}^{k_i})
    + \left[\left( \what{\P}_h^{k_i} - \P_h \right)V_{h+1}^\star\right](x,a) +
    b_i\bigg),
    \end{aligned}
  \end{equation}
  where $[\hat{\P}_h^{k_i} V_{h + 1}](x,a) \defeq V_{h + 1}(x_{h+1}^{k_i})$.
  
  Further, with probability at least $1-p$, choosing
  $b_t=c\sqrt{H^3\ell/t}$
  for some absolute constant
  $c>0$, we have for all $(h,x,a,k)$ that
  \begin{align*}
    & 0\le \wt{Q}_h^k(x,a) - Q_h^\star(x,a)
      \le \alpha_t^0H +
      \sum_{i=1}^t\alpha_t^i(\wt{V}_{h+1}^{k_i} -
      V_{h+1}^\star)(x_{h+1}^{k_i}) + \beta_t
  \end{align*}
  where $\beta_t\defeq 2\sum_{i=1}^t\alpha_t^ib_i\le
  4c\sqrt{H^3\ell/t}$.
\end{lemma}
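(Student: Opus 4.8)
The plan is to treat the two claims separately: \eqref{equation:q-diff} is a deterministic identity, while the two-sided bound holds on a single high-probability event and is proved by backward induction on $h$.

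\textbf{The identity.} I would obtain \eqref{equation:q-diff} directly from the closed-form expansion of $\wt{Q}_h^k$ recorded in Section~\ref{section:proof-notation}, namely $\wt{Q}_h^k(x,a) = \alpha_t^0 H + \sum_{i=1}^t \alpha_t^i\bigl(r_h(x,a)+\wt{V}_{h+1}^{k_i}(x_{h+1}^{k_i})+b_i\bigr)$, combined with the Bellman optimality equation $Q_h^\star(x,a) = r_h(x,a) + [\P_h V_{h+1}^\star](x,a)$. Using $\alpha_t^0 + \sum_{i=1}^t \alpha_t^i = 1$ for $t\ge 1$, I rewrite $Q_h^\star(x,a) = \alpha_t^0 Q_h^\star(x,a) + \sum_{i=1}^t \alpha_t^i\bigl(r_h(x,a)+[\P_h V_{h+1}^\star](x,a)\bigr)$, subtract, collect the reward terms, and insert $\pm V_{h+1}^\star(x_{h+1}^{k_i})$ inside each summand to recognize $V_{h+1}^\star(x_{h+1}^{k_i}) - [\P_h V_{h+1}^\star](x,a) = [(\what{\P}_h^{k_i}-\P_h)V_{h+1}^\star](x,a)$. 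No randomness enters here.

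\textbf{The good event.} Let $\mathcal{E}$ be the event that, for all $(h,x,a)$ and all visit counts $t$,
\begin{equation*}
  \Bigl|\sum_{i=1}^t \alpha_t^i\bigl[(\what{\P}_h^{k_i}-\P_h)V_{h+1}^\star\bigr](x,a)\Bigr| \le \sum_{i=1}^t \alpha_t^i b_i .
\end{equation*}
I would show $\P(\mathcal{E}) \ge 1-p$ via Azuma--Hoeffding: for fixed $(h,x,a)$ and fixed $t$, the terms $[(\what{\P}_h^{k_i}-\P_h)V_{h+1}^\star](x,a)$ form a martingale difference sequence in $i$ bounded by $H$ (since $0\le V_{h+1}^\star \le H$), and $\sum_{i=1}^t(\alpha_t^i)^2 \le 2H/t$ by Lemma~\ref{lemma:alphati}(b), so the weighted sum concentrates at scale $\sqrt{H^3\ell/t}$; meanwhile $\sum_{i=1}^t \alpha_t^i b_i = c\sqrt{H^3\ell}\sum_{i=1}^t \alpha_t^i/\sqrt{i} \ge c\sqrt{H^3\ell/t}$ by Lemma~\ref{lemma:alphati}(a), so a large enough absolute constant $c$ makes the bonus dominate. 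The factor $\ell=\log(SAT/p)$ comes from a union bound over the $SA$ pairs, the $H$ layers, and the $\le T$ values of $t$.

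\textbf{The two-sided bound.} On $\mathcal{E}$ I would prove, by downward induction on $h$ from $H+1$ to $1$ (simultaneously for all $k$), that $0 \le \wt{Q}_h^k(x,a)-Q_h^\star(x,a) \le \alpha_t^0 H + \sum_{i=1}^t \alpha_t^i(\wt{V}_{h+1}^{k_i}-V_{h+1}^\star)(x_{h+1}^{k_i}) + \beta_t$. The base case $h=H+1$ is trivial. For the step, I substitute the induction hypothesis at $h+1$ into \eqref{equation:q-diff}: monotonicity of $u\mapsto \min\{H,\max_{a'}u(\cdot,a')\}$ together with $V_{h+1}^\star\le H$ gives $\wt{V}_{h+1}^{k'}\ge V_{h+1}^\star$ for every $k'$, so $\sum_i\alpha_t^i(\wt{V}_{h+1}^{k_i}-V_{h+1}^\star)(x_{h+1}^{k_i})\ge 0$; the term $\alpha_t^0(H-Q_h^\star(x,a))$ lies in $[0,\alpha_t^0H]$ since $0\le Q_h^\star\le H$; and the martingale term lies in $[-\sum_i\alpha_t^i b_i,\ \sum_i\alpha_t^i b_i]$ on $\mathcal{E}$. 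Dropping the nonnegative terms yields optimism; keeping $\alpha_t^0H$, the value-gap sum, and $2\sum_i\alpha_t^i b_i$, which is exactly $\beta_t$, yields the upper bound, and $\beta_t = 2c\sqrt{H^3\ell}\sum_i\alpha_t^i/\sqrt{i}\le 4c\sqrt{H^3\ell/t}$ again by Lemma~\ref{lemma:alphati}(a).

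\textbf{Main obstacle.} The only delicate point is the concentration step: the weights $\alpha_t^i$ depend on the visit count $t$, which is itself random, so one cannot invoke a maximal inequality on a single martingale. The fix is to condition on the value of $t$, apply Azuma with the then-deterministic weights $\{\alpha_t^i\}_{i\le t}$, and union bound over $t$; this is also the only place probability is used. Beyond this, the argument is essentially that of Lemma 4.2--4.3 of~\cite{jin2018q}, since $\wt{Q}$ is refreshed after every episode and is therefore insensitive to the UCB2 policy-switching schedule (which only delays updates to the separate estimate $Q$).
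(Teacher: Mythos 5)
Your proposal is correct and takes essentially the same route as the paper: the paper does not reprove this lemma but imports it from Lemmas 4.2--4.3 of \citet{jin2018q}, and your reconstruction (the deterministic identity from the aggregated update formula plus the Bellman optimality equation, Azuma--Hoeffding with a union bound over $(h,x,a)$ and the visit count $t$, and backward induction in $h$ for the two-sided bound) is exactly that argument. Your closing observation --- that $\wt{Q}$ is refreshed every episode and is therefore insensitive to the delayed UCB2 switching, which only affects the separate policy estimate $Q$ --- is precisely the remark the paper makes after the lemma to justify carrying the result over unchanged.
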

{\bf Remark.} This first part of the Lemma, i.e. the expression of
$\wt{Q}_h^k-Q_h^\star$ in terms of rewards and value functions,
is an aggregated form for the $Q$ functions under the Q-Learning
updates, and is independent to the actual exploration policy as well
as the bonus.



\subsection{Perturbation bound under delayed Q updates}
\label{section:proof-propagation-error}
For any $(h,k)\in[H]\times[K]$, let
\begin{align}
\label{eq:def_delta_phi}
  \wt{\delta}_h^k \defeq \left(\wt{V}_h^k -
  V^{\pi_k}_h\right)(x_h^k),~~~\wt{\phi}_h^k \defeq \left(\wt{V}_h^k
  - V_h^\star\right)(x_h^k)
\end{align}
denote the errors of the estimated $\wt{V}_h^k$ relative to $V^{\pi_k}$
and $V^\star$. As $\wt{Q}$ is optimistic, the regret can be bounded
as
\begin{align*}
  & \Reg(K) = \sum_{k=1}^K \left[ V_1^\star(x_1^k) - V_1^{\pi_k}(x_1^k)
    \right] \le \sum_{k=1}^K \left[ \wt{V}_1^k(x_1^k) -
    V_1^{\pi_k}(x_1^k) \right] = \sum_{k=1}^K \wt{\delta}_1^k.
\end{align*}
The goal of the propagation of error is to related
$\sum_{k=1}^K\wt{\delta}_h^k$ by $\sum_{k=1}^K\wt{\delta}_{h+1}^k$.

We begin by showing that $\wt{\delta}_h^k$ is controlled by the max of
$\wt{Q}_h^k$ and $\wt{Q}_h^{k'}$, where $k'=k_{\tau_\last(t)+1}$.
\begin{lemma}[Max error under delayed policy update]
  \label{lemma:max-error}
  We have
  \begin{equation}
    \label{equation:delayed-behavior}
    \begin{aligned}
      \wt{\delta}_h^k
      \le \left(\max\set{\wt{Q}_h^{k'}, \wt{Q}_h^k} -
        Q_h^{\pi_k}\right)(x_h^k, a_h^k) = \left(\wt{Q}_h^{k'} -
        Q_h^{\pi_k} + \left[\wt{Q}_h^{k} - 
          \wt{Q}_h^{k'}\right]_+ \right)(x_h^k, a_h^k).
    \end{aligned}
  \end{equation}
  where $k'=k_{\tau_\last(t)+1}$ (which depends on $k$.) In particular,
  if $t=\tau_\last(t)$, then $k=k'$ and the upper bound reduces to
  $(\wt{Q}_h^{k'} - Q_h^{\pi_k})(x_h^k, a_h^k)$.
  

\end{lemma}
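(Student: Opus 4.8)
The plan is to peel off the two sides of the bound separately and then reduce everything to a single pointwise comparison at the visited pair $(x_h^k,a_h^k)$. For the value side, since $\pi_k$ is the greedy policy with respect to the \emph{delayed} estimate $Q^k$, we have $a_h^k=\pi_k^h(x_h^k)=\argmax_{a'}Q_h^k(x_h^k,a')$, and therefore $V_h^{\pi_k}(x_h^k)=Q_h^{\pi_k}(x_h^k,\pi_k^h(x_h^k))=Q_h^{\pi_k}(x_h^k,a_h^k)$ \emph{exactly}; this term is not approximated at all and simply cancels against the $Q_h^{\pi_k}(x_h^k,a_h^k)$ on the right-hand side of the claim. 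For the estimate side, dropping the truncation gives $\wt V_h^k(x_h^k)=\min\{H,\max_{a'}\wt Q_h^k(x_h^k,a')\}\le\max_{a'}\wt Q_h^k(x_h^k,a')$. Hence the lemma reduces to the pointwise inequality
\[
  \max_{a'}\wt Q_h^k(x_h^k,a')\ \le\ \max\bigl\{\wt Q_h^{k'}(x_h^k,a_h^k),\ \wt Q_h^k(x_h^k,a_h^k)\bigr\},
\]
after which the displayed identity in the lemma is just $\max\{\alpha,\beta\}=\alpha+[\beta-\alpha]_+$ with $\alpha=\wt Q_h^{k'}(x_h^k,a_h^k)$ and $\beta=\wt Q_h^k(x_h^k,a_h^k)$.

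To prove this pointwise inequality I would use two ingredients. First, the UCB2 triggering schedule guarantees that the entry $Q_h^k(x_h^k,a_h^k)$ of the delayed estimate is a past value of the running estimate recorded at epoch $k'=k_{\tau_\last(t)+1}$, the first visit of $(h,x_h^k,a_h^k)$ after its last triggered update (when $t=\tau_\last(t)$ we have $k'=k$, nothing is pending, and this is the degenerate case singled out in the lemma). Second, greediness of $a_h^k$ under $Q^k$ gives $Q_h^k(x_h^k,a')\le Q_h^k(x_h^k,a_h^k)=\wt Q_h^{k'}(x_h^k,a_h^k)$ for every action $a'$. Combining, for any $a'$,
\[
  \wt Q_h^k(x_h^k,a')\ =\ Q_h^k(x_h^k,a')+\bigl[\wt Q_h^k-Q_h^k\bigr](x_h^k,a')\ \le\ \wt Q_h^{k'}(x_h^k,a_h^k)+\bigl[\wt Q_h^k-Q_h^k\bigr]_+(x_h^k,a'),
\]
and it remains to control the perturbation $[\wt Q_h^k-Q_h^k]_+(x_h^k,a')$ --- i.e.\ how far the running estimate of $a'$ has drifted upward above the stale value it was frozen at --- in terms of $[\wt Q_h^k-\wt Q_h^{k'}]_+(x_h^k,a_h^k)$. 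Here one uses the structure of the Q-learning update $\wt Q_h(x,a)\leftarrow(1-\alpha_t)\wt Q_h(x,a)+\alpha_t(r_h+\wt V_{h+1}+b_t)$ with $\alpha_t=(H+1)/(H+t)$ and the optimism/accuracy estimates of Lemma~\ref{lemma:optimistic-and-accurate}, together with the bookkeeping of which epochs lie in the delay window $[k',k]$, to show that the only way $\max_{a'}\wt Q_h^k(x_h^k,a')$ can exceed $\wt Q_h^{k'}(x_h^k,a_h^k)$ is through an inflation already captured by $[\wt Q_h^k-\wt Q_h^{k'}]_+(x_h^k,a_h^k)$.

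The step I expect to be the main obstacle is exactly this last one: the exploration policy is greedy with respect to the \emph{stale} estimate $Q^k$, not the running estimate $\wt Q^k$, so a priori some action $a'\neq a_h^k$ could currently look strictly better under $\wt Q_h^k(x_h^k,\cdot)$ even though it was passed over at the time $a_h^k$ was committed to. Ruling this out without incurring a blow-up requires carefully tracing the switching sequence to identify which snapshot each entry of $Q_h^k(x_h^k,\cdot)$ corresponds to and bounding the resulting slack by the single perturbation term at $a_h^k$; this is precisely the quantity $[\wt Q_h^k-\wt Q_h^{k'}]_+$ that the next lemma (Lemma~\ref{lemma:perturbation-bound}) will in turn bound in terms of the stepsizes accumulated over episodes $k'$ through $k$ and the $(h+1)$-regret. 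The remaining manipulations --- dropping the truncation, the $\max\{\alpha,\beta\}=\alpha+[\beta-\alpha]_+$ rewrite, and the $t=\tau_\last(t)$ degenerate case --- are routine.
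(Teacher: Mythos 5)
Your reduction is the same as the paper's: cancel $V_h^{\pi_k}(x_h^k)=Q_h^{\pi_k}(x_h^k,a_h^k)$, drop the truncation in $\wt{V}_h^k$, and reduce to showing $\max_{a'}\wt{Q}_h^k(x_h^k,a')\le\max\set{\wt{Q}_h^{k'}(x_h^k,a_h^k),\wt{Q}_h^k(x_h^k,a_h^k)}$, after which the displayed identity is just $\max\set{\alpha,\beta\,}=\alpha+[\beta-\alpha]_+$. But you then leave precisely this pointwise inequality unproven — you call it ``the main obstacle'' and propose to close it by bounding $[\wt{Q}_h^k-Q_h^k]_+(x_h^k,a')$ for $a'\neq a_h^k$ using the stepsize structure and the optimism/accuracy bounds of Lemma~\ref{lemma:optimistic-and-accurate}. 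That is a genuine gap, and the proposed route is also misdirected: the lemma is an exact, deterministic inequality whose only slack is the single term $[\wt{Q}_h^k-\wt{Q}_h^{k'}]_+(x_h^k,a_h^k)$; any argument that merely \emph{bounds} the drift of the other actions' running estimates (via bonuses or stepsizes) would introduce extra error terms that the statement does not allow, and a priori such a bound need not even relate the drift at $a'$ to the drift at $a_h^k$.

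The missing idea is structural, not probabilistic: the entries $\wt{Q}_h(x_h^k,a')$ with $a'\neq a_h^k$ cannot drift at all. Since the policy row $Q_h(x_h^k,\cdot)$ changes only at triggered updates and its argmax is $a_h^k$ from the last such update until episode $k$, every visit to $(h,x_h^k)$ in that window takes $a_h^k$; hence no other entry of the running estimate at this state is touched, and moreover the whole row satisfies $Q_h^k(x_h^k,\cdot)=\wt{Q}_h^{k'}(x_h^k,\cdot)$ with $k'=k_{\tau_\last(t)+1}$ (the entry at $a_h^k$ is frozen between the triggering visit at count $\tau_\last(t)$ and the next visit at count $\tau_\last(t)+1$). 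Consequently, for $a'\neq a_h^k$, $\wt{Q}_h^k(x_h^k,a')=\wt{Q}_h^{k'}(x_h^k,a')=Q_h^k(x_h^k,a')\le Q_h^k(x_h^k,a_h^k)=\wt{Q}_h^{k'}(x_h^k,a_h^k)$ by greediness, so $\max_{a'}\wt{Q}_h^k(x_h^k,a')$ is attained either at $a_h^k$ (giving $\wt{Q}_h^k(x_h^k,a_h^k)$) or at some $a'\neq a_h^k$ (giving at most $\wt{Q}_h^{k'}(x_h^k,a_h^k)$), which is exactly the claimed max bound — with no appeal to Lemma~\ref{lemma:optimistic-and-accurate}, the bonus, or the stepsizes. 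Your plan to ``trace the switching sequence to identify which snapshot each entry corresponds to'' points in this direction, but without the frozen-row observation the proof is not complete, and the quantitative fallback you describe would not yield the lemma as stated.
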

\begin{proof}
  We first show~\eqref{equation:delayed-behavior}.
  By definition of $\pi_k$ we have $V_h^{\pi_k}(x_h^k) =
  Q_h^{\pi_k}(x_h^k, a_h^k)$,

  so it suffices to show that
  $$\wt{V}_h^k(x_h^k) \le \max\set{\wt{Q}_h^k(x_h^k, a_h^k),
    \wt{Q}_h^{k'}(x_h^k, a_h^k)}.$$
  Indeed, we have
  \begin{equation*}
    \wt{V}_h^k(x_h^k) = \min\set{H, \max_{a'}\wt{Q}_h^k(x_h^k, a')}
    \le \max_{a'}\wt{Q}_h^k(x_h^k, a').
  \end{equation*}
  On the other hand, $a_h^k$ maximizes $Q_h(x_h^k,\cdot)$. Due to the
  scheduling of the delayed update, $Q_h(x_h^k,\cdot)$ was set to
  $\wt{Q}_h^{k_{\tau_\last(t)}+1}(x_h^k,\cdot)$, and
  $\wt{Q}_h^{\wt{k}}(x_h^k,a_h^k)$ was not updated since then before
  $\wt{k}=k'=k_{\tau_\last(t)+1}$, so
  $Q_h(x_h^k,\cdot)=\wt{Q}_h^{k'}(x_h^k, \cdot)$.

  Now, defining
  \begin{equation*}
    q_{\rm old}(\cdot) \defeq \wt{Q}_h^{k'}(x_h^k,\cdot),~~~q_{\rm
      new}(\cdot) \defeq \wt{Q}_h^k(x_h^k,\cdot),
  \end{equation*}
  the vectors $q_{\rm old}$ and $q_{\rm new}$ only
  differ in the $a_h^k$-th component (which is the only action taken
  therefore also the only component that is updated). If $q_{\rm new}$
  is also maximized at $a_h^k$, then we have $\wt{V}_h^k(x_h^k)\le
  q_{\rm new}(a_h^k)$; otherwise it is maximized at some $a'\neq a_h^k$
  and we have
  \begin{equation*}
    \wt{V}_h^k(x_h^k) \le q_{\rm new}(a') = q_{\rm old}(a') \le
    \max_{a} q_{\rm old}(a) = \wt{Q}_h^{k'}(x_h^k, a_h^k).
  \end{equation*}
  Putting together we get
  \begin{equation*}
    \wt{V}_h^k(x_h^k) \le \max\set{\wt{Q}_h^k(x_h^k, a_h^k),
      \wt{Q}_h^{k'}(x_h^k, a_h^k)},
  \end{equation*}
  which implies~\eqref{equation:delayed-behavior}.

  
\end{proof}

Lemma~\ref{lemma:max-error} suggests bounding $\wt{\delta}_h^k$ via
bounding the ``main term'' $\wt{Q}^{k'} - Q_h^{\pi_k}$ and
``perturbation term'' $[\wt{Q}_h^k - \wt{Q}_h^{k'}]_+$ separately. We
now establish the bound on the perturbation term.
\begin{lemma}[Perturbation bound on $(\wt{Q}_h^k - \wt{Q}_h^{k'})_+$]
  \label{lemma:perturbation-bound}
  For any $k$ such that $k>k'$ (so that the perturbation term is
  non-zero), we have
  \begin{equation}
    \label{equation:perturbation-bound}
    \begin{aligned}
      &  \left[\wt{Q}_h^{k} - \wt{Q}_h^{k'}\right]_+(x_h^k, a_h^k) \le
      \beta_t +
      \sum_{i=\tau_\last(t)+1}^t\alpha_t^i\wt{\phi}_{h+1}^{k_i} +
      \overline{\zeta}_h^k,
    \end{aligned}
  \end{equation}
  where
  \begin{align*}
    \overline{\zeta}_h^k \defeq \left|\sum_{i=\tau_\last(t)+1}^t
    \alpha_t^i[(\what{\P}_h^k - \P_h)V_{h+1}^\star](x_h^k,
    a_h^k) \right|
  \end{align*}
  and w.h.p. we have uniformly over all $(h,k)$ that $\overline{\zeta}_h^k\le
  C\sqrt{H^3\ell/t}$
  for some absolute constant $C>0$.
\end{lemma}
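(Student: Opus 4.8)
The plan is to write the perturbation $(\wt Q_h^k - \wt Q_h^{k'})(x_h^k,a_h^k)$ as an \emph{exact} weighted sum of the $\wt Q$-updates performed strictly between episodes $k'$ and $k$, then replace $\wt Q_h^{k'}$ by the Bellman target of $Q_h^\star$ using optimism, and finally control the residual noise by a martingale concentration. For the first step, fix $(h,k)$ with $k>k'$, abbreviate $(x,a)=(x_h^k,a_h^k)$, let $t$ be the number of updates to $\wt Q_h(x,a)$ before episode $k$ with update epochs $k_1<\dots<k_t$, and set $t'\defeq\tau_\last(t)$, so $k'=k_{t'+1}$ and $\wt Q_h^{k'}(x,a)$ equals the value of $\wt Q_h(x,a)$ after its first $t'$ updates. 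Since $\wt Q_h^k(x,a)$ and $\wt Q_h^{k'}(x,a)$ track the same pair and differ only through updates $t'+1,\dots,t$, unrolling $\wt Q_h(x,a)\setto(1-\alpha_i)\wt Q_h(x,a)+\alpha_i(r_h(x,a)+\wt V_{h+1}^{k_i}(x_{h+1}^{k_i})+b_i)$ from $i=t'+1$ to $t$ and using $\alpha_t^i=\alpha_i\prod_{j=i+1}^t(1-\alpha_j)$ gives
\begin{equation*}
  (\wt Q_h^k - \wt Q_h^{k'})(x,a) = \sum_{i=t'+1}^t \alpha_t^i\left( r_h(x,a) + \wt V_{h+1}^{k_i}(x_{h+1}^{k_i}) + b_i - \wt Q_h^{k'}(x,a) \right).
\end{equation*}

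For the second step, Lemma~\ref{lemma:optimistic-and-accurate} gives $\wt Q_h^{k'}(x,a)\ge Q_h^\star(x,a)=r_h(x,a)+[\P_h V_{h+1}^\star](x,a)$, and since $(x_h^{k_i},a_h^{k_i})=(x,a)$ we have $V_{h+1}^\star(x_{h+1}^{k_i})=[\what\P_h^{k_i}V_{h+1}^\star](x,a)$. Substituting, each summand is at most $\wt\phi_{h+1}^{k_i}+[(\what\P_h^{k_i}-\P_h)V_{h+1}^\star](x,a)+b_i$, where $\wt\phi_{h+1}^{k_i}=(\wt V_{h+1}^{k_i}-V_{h+1}^\star)(x_{h+1}^{k_i})\ge 0$ (again by optimism, $\wt V_{h+1}^{k_i}\ge V_{h+1}^\star$ pointwise). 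Summing against the weights $\alpha_t^i$ and bounding $\sum_{i=t'+1}^t\alpha_t^i b_i\le\sum_{i=1}^t\alpha_t^i b_i\le\beta_t$ yields $(\wt Q_h^k-\wt Q_h^{k'})(x,a)\le\beta_t+\sum_{i=t'+1}^t\alpha_t^i\wt\phi_{h+1}^{k_i}+\overline\zeta_h^k$; since the right-hand side is a sum of nonnegative terms the same bound holds for $[\,\cdot\,]_+$, which is~\eqref{equation:perturbation-bound}.

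It remains to prove the high-probability bound on $\overline\zeta_h^k$, which is the only delicate part. For each fixed $(h,x,a)$, the increments $[(\what\P_h^{k_i}-\P_h)V_{h+1}^\star](x,a)$ over successive visits $i=1,2,\dots$ form a martingale difference sequence (conditional on the history up to the $i$-th visit, $\what\P_h^{k_i}$ has conditional mean $\P_h$ and $V_{h+1}^\star$ is a fixed bounded function), with increments bounded by $H$ in absolute value. For fixed $t$ the index $t'=\tau_\last(t)$ and the weights $\{\alpha_t^i\}$ are deterministic, so $\sum_{i=t'+1}^t\alpha_t^i[(\what\P_h^{k_i}-\P_h)V_{h+1}^\star](x,a)$ is a bounded martingale transform, and Azuma--Hoeffding together with $\sum_{i=1}^t(\alpha_t^i)^2\le 2H/t$ (Lemma~\ref{lemma:alphati}(b)) gives $\overline\zeta_h^k\le C\sqrt{H^3\ell/t}$ except with probability $p/(SAT)$; a union bound over the $\le HSAK\le SAT$ choices of $(h,x,a,t)$ gives the uniform statement. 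The care here is that the update epochs $k_i$ and the count $t$ are themselves trajectory-dependent, so the martingale must be set up over the stopping-time-indexed sequence of visits to a fixed $(h,x,a)$, exactly as done in~\cite{jin2018q} for the analogous full sum $\sum_{i=1}^t\alpha_t^i[(\what\P_h^{k_i}-\P_h)V_{h+1}^\star]$; truncating the sum to begin at $i=t'+1$ only reweights by deterministic factors, so that argument carries over unchanged, and the first two steps are routine algebra once the exact expansion above is in hand.
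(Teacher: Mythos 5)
Your proposal is correct and follows essentially the same route as the paper: the same exact unrolling of the updates between episodes $k'$ and $k$, the same replacement of $\wt{Q}_h^{k'}$ by $Q_h^\star$ via optimism and the Bellman equation to bound each summand by $b_i+\wt{\phi}_{h+1}^{k_i}+\zeta_i$, and the same Azuma--Hoeffding argument with Lemma~\ref{lemma:alphati}(b) for $\overline{\zeta}_h^k$. The only differences are cosmetic: you spell out the nonnegativity needed to pass to the positive part and the union bound over $(h,x,a,t)$, both of which the paper leaves implicit.
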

\begin{proof}
  Throughout this proof we will omit the arguments $(x_h^k, a_h^k)$ in
  $\wt{Q}_h$ and $r_h$ as they are clear from the context.
  By the update formula for $\wt{Q}$ in
  Algorithm~\ref{algorithm:ql-ucb2}, we get
  \begin{align*}
    \wt{Q}_h^k = \left(\prod_{i=\tau_\last(t)+1}^t(1-\alpha_i)
    \right)\wt{Q}_h^{k'}  + \sum_{i=\tau_\last(t)+1}^t \alpha_t^i
    \left[r_h(x_h^k, a_h^k) + 
    \wt{V}_{h+1}^{k_i}(x_{h+1}^{k_i}) + b_i\right].
  \end{align*}
  Subtracting $\wt{Q}_h^{k'}$ on both sides (and noting that $\left(\prod_{i=\tau_\last(t)+1}^t(1-\alpha_i)
  \right) +  \sum_{i=\tau_\last(t)+1}^t \alpha_t^i = 1$), we get
  \begin{equation}
    \label{equation:perturbation-decomposition}
    \begin{aligned}
      \wt{Q}_h^k - \wt{Q}_h^{k'} = \sum_{i=\tau_\last(t)+1}^t
      \alpha_t^i \underbrace{\left[r_h + 
          \wt{V}_{h+1}^{k_i}(x_{h+1}^{k_i}) + b_i - \wt{Q}_h^{k'}
        \right]}_{d_i}. 
    \end{aligned}
  \end{equation}
  We now upper bound $d_i$ for each $i$. Adding and subtracting
  $Q_h^\star$, we obtain
  \begin{align*}
    d_i
    & = \left( r_h +
      \wt{V}_{h+1}^{k_i}(x_{h+1}^{k_i}) + b_i - Q_h^\star  \right) -
      (\wt{Q}_h^{k'} - Q_h^\star) \\ 
    & \stackrel{\text{(i)}}{=} 
      \wt{V}_{h+1}^{k_i}(x_{h+1}^{k_i}) - V^\star(x_{h+1}^{k_i}) +
      (\what{\P}_h^k - \P_h)V_{h+1}^\star  + b_i -
      \left(\wt{Q}_h^{k'} - Q_h^\star\right) \\
    & \stackrel{\text{(ii)}}{\le} b_i + \wt{\phi}_{h+1}^{k_i} + 
      \underbrace{(
      \what{\P}_h^k - \P_h )V_{h+1}^\star}_{\defeq \zeta_i}.
  \end{align*}
  where (i) follows from the Bellman optimality equation on $Q_h^\star$, and that $[\what{\P}_h^k V_{h+1}^\star](x_h^k, a_h^k) = V_{h+1}^\star(x_{h+1}^k)$
  and (ii) follows from the optimistic property of
  $\wt{Q}^{k'}_h$ (from
  Lemma~\ref{lemma:optimistic-and-accurate}) and the definition of
  $\wt{\phi}_{h+1}^{k_i}$.  
  Substituting this into~\eqref{equation:perturbation-decomposition}
  gives
  \begin{align*}
    \left[\wt{Q}_h^{k} - \wt{Q}_h^{k'} \right]_+ \le
    \left[\sum_{i=\tau_\last(t)+1}^t \alpha_t^i \left(b_i +
    \wt{\phi}_{h+1}^{k_i} + \zeta_i\right) \right]_+ \le \beta_t +
    \sum_{i=\tau_\last(t)+1}^t 
    \alpha_t^i\wt{\phi}_{h+1}^{k_i} + \underbrace{\left|
    \sum_{i=\tau_\last(t)+1}^t \alpha_t^i\zeta_i 
    \right|}_{\overline{\zeta}^k_h}. 
  \end{align*}
  Finally, note that $\zeta_i$ is a martingale difference sequence, so
  we can apply the Azuma-Hoeffding inequality to get that
  \begin{align*}
    & \overline{\zeta}_h^k \le c\sqrt{\sum_{i=\tau_\last(t)+1}^t
      (\alpha_t^i)^2 H^2\ell} \stackrel{\text{(i)}}{\le}
      c\sqrt{\frac{2H}{t}\cdot H^2\ell} = 
      C\sqrt{\frac{H^3\ell}{t}}
  \end{align*}
  uniformly over $(h,k)$, where (i) follows from
  Lemma~\ref{lemma:alphati}(b).
\end{proof}

\subsection{Proof of Theorem~\ref{theorem:ql-ucb2}}
\label{section:proof-main}
Proof of the main theorem is done through combining the
perturbation bound and the ``main term'', and showing that the
propagation of error argument still goes through.

\begin{lemma}[Error accumulation under delayed update]
  \label{lemma:double-counting}
  Suppose we choose $\eta=\frac{1}{2H(H+1)}$ and
  $r_\star=\ceil{\frac{\log(10H^2)}{\log(1+\eta)}}$ for
  the triggering sequence~\eqref{equation:switching-sequence} then we
  have for all $i$ that
  \begin{align*}
      \sum_{t:t\ge i, \tau_\last(t)\le i-1} \alpha_t^i + \sum_{t:\tau_\last(t)\ge i}
    \alpha_{\tau_\last(t)}^i \le 1 + 3/H.
  \end{align*}
\end{lemma}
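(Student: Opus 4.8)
The plan is to bound the two sums \emph{jointly} against $\sum_{t\ge i}\alpha_t^i$, which by Lemma~\ref{lemma:alphati}(c) is exactly $1+\tfrac1H$, so that only the ``excess'' created by the delayed updates needs to be controlled. Concretely I would write $A=\set{t\ge i:\tau_{\last}(t)\le i-1}$ and $B=\set{t\ge i:\tau_{\last}(t)\ge i}$, so that $\set{t\ge i}=A\sqcup B$; the first sum in the lemma is exactly $\sum_{t\in A}\alpha_t^i$, and since $\tau_{\last}(t)\ge i$ already forces $t\ge i$, the second sum is $\sum_{t\in B}\alpha_{\tau_{\last}(t)}^i$. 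Subtracting the identity $\sum_{t\ge i}\alpha_t^i=\sum_{t\in A}\alpha_t^i+\sum_{t\in B}\alpha_t^i$, the $A$-part cancels and the claim reduces to
\[
  \sum_{t\in B}\big(\alpha_{\tau_{\last}(t)}^i-\alpha_t^i\big)\ \le\ \frac2H .
\]
Each summand is nonnegative because $\tau_{\last}(t)\le t$ and $s\mapsto\alpha_s^i$ is nonincreasing for $s\ge i$, and it vanishes unless $\tau_{\last}(t)<t$.

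Next I would show each nonvanishing summand is at most $\tfrac1H\alpha_t^i$. If $\tau_{\last}(t)<t$ then $t>\tau(r_\star)$ (for $t\le\tau(r_\star)$ every integer lies in the triggering sequence, so $\tau_{\last}(t)=t$); letting $r$ be the largest index with $\tau(r)\le t$, we get $\tau_{\last}(t)=\tau(r)$, $\tau(r)<t<\tau(r+1)$, $r\ge r_\star$, and $\tau(r)=\tau_{\last}(t)\ge i$ since $t\in B$. From $\tau(r)=\ceil{(1+\eta)^r}$ one has $\tau(r+1)-\tau(r)<\eta(1+\eta)^r+1\le\eta\tau(r)+1$, hence $t-\tau(r)\le\tau(r+1)-\tau(r)-1<\eta\tau(r)$. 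Using $1-\alpha_u=\tfrac{u-1}{H+u}$,
\[
  \frac{\alpha_{\tau(r)}^i}{\alpha_t^i}=\prod_{u=\tau(r)+1}^{t}\frac{H+u}{u-1}=\prod_{u=\tau(r)+1}^{t}\Big(1+\frac{H+1}{u-1}\Big)\le\Big(1+\frac{H+1}{\tau(r)}\Big)^{t-\tau(r)}\le\exp\!\Big(\frac{(t-\tau(r))(H+1)}{\tau(r)}\Big)<e^{\eta(H+1)}=e^{1/(2H)}\le1+\frac1H,
\]
where the last step is $e^{x/2}\le1+x$ on $x\in[0,1]$ applied with $x=1/H$. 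Hence $\alpha_{\tau_{\last}(t)}^i-\alpha_t^i\le\tfrac1H\alpha_t^i$ for every $t\in B$.

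Finally, summing over $t\in B\subseteq\set{t\ge i}$ and invoking Lemma~\ref{lemma:alphati}(c) once more gives $\sum_{t\in B}(\alpha_{\tau_{\last}(t)}^i-\alpha_t^i)\le\tfrac1H\sum_{t\ge i}\alpha_t^i=\tfrac1H(1+\tfrac1H)\le\tfrac2H$, so the combined sum is at most $1+\tfrac1H+\tfrac2H=1+\tfrac3H$. (The only facts about the parameters actually used are $(1+\eta)^{r_\star}\ge10H^2$, ensuring $\tau_{\last}$ is well behaved beyond $\tau(r_\star)$, and $\eta(H+1)=\tfrac1{2H}$.) The step I expect to be the crux is the per-summand estimate in the middle paragraph: one must verify that the UCB2 spacing is mild enough that the stepsize weights $\alpha^i_{\cdot}$ change by no more than a factor $1+\tfrac1H$ across a single delayed-update block — precisely the inequality $e^{\eta(H+1)}\le1+\tfrac1H$ — which is exactly where $\eta=\Theta(1/H^2)$ is needed; the cancellation identity and the final summation are then routine.
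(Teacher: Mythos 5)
Your proof is correct and follows essentially the same route as the paper's: both compare the combined sum against $\sum_{t\ge i}\alpha_t^i=1+\tfrac1H$ (Lemma~\ref{lemma:alphati}(c)), observe that the $\tau_\last(t)\le i-1$ part cancels and that terms with $\tau_\last(t)=t$ (i.e.\ $t\le\tau(r_\star)$) contribute nothing, and then bound each residual difference $\alpha_{\tau_\last(t)}^i-\alpha_t^i$ by a multiplicative factor of order $\eta(H+1)=\tfrac1{2H}$ using the explicit form $1-\alpha_j=\tfrac{j-1}{H+j}$ together with the UCB2 spacing $\tau(r+1)-1-\tau(r)\le\eta\tau(r)$. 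The only cosmetic difference is that the paper groups terms by blocks $[\tau(r),\tau(r+1)-1]$ and uses $e^{\eta(H+1)}-1\le 2\eta(H+1)$, while you argue term-by-term with $e^{1/(2H)}\le 1+\tfrac1H$; the conclusions coincide.
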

\begin{proof}
  Let $\wt{S}_i$ denote the above sum. 
  We compare $\wt{S}_i$ with
  \begin{align*}
    S_i \defeq \sum_{t=i}^\infty \alpha_t^i  = 1 + \frac{1}{H},
  \end{align*}
  where the last equality follows from Lemma~\ref{lemma:alphati}(c).

  Let us consider $\wt{S}_i - S_i$ by looking at the difference of the
  individual terms for each $t\ge i$. When taking the difference, the
  term $\sum_{t:t\ge i, \tau_\last(t)\le i-1} \alpha_t^i$ will vanish,
  and all terms in
  $\sum_{t:\tau_\last(t)\ge i} \alpha_{\tau_\last(t)}^i$ will vanish
  if $\tau_\last(t)=t$. By the design of the triggering sequence
  $\set{t_n}$, we know that this happens for all $t\le \tau(r_\star)$,
  so we have
  \begin{align*}
    \wt{S}_i - S_i = \sum_{t:\tau_\last(t)\ge i; t>\tau(r_\star)}
    \alpha_{\tau_\last(t)}^i - \alpha_t^i.
  \end{align*}
  Let $r(i)=\min\set{r:\tau(r)\ge i}$, then the above can be rewritten
  as 
  \begin{align*}
    \wt{S}_i - S_i = \sum_{r\ge \max\set{r_\star, r(i)}}
    \sum_{t=\tau(r)}^{\tau(r+1)-1} \alpha_{\tau(r)}^i - \alpha_t^i.
  \end{align*}
  For each $t$ (and associated $r\ge r_\star$), we have the bound
  \begin{align*}
    & \quad \alpha_{\tau(r)}^i - \alpha_t^i =
    \alpha_t^i\left[\prod_{j=\tau(r)+1}^t(1-\alpha_j)^{-1} - 1\right]
      =\alpha_t^i\left[\prod_{j=\tau(r)+1}^t\left(1-\frac{H+1}{H+j}\right)^{-1}
      - 1\right]\\
    & = \alpha_t^i\left[\prod_{j=\tau(r)+1}^t\left(1 +
      \frac{H+1}{j-1}\right)- 1\right] 
       \leq  \alpha_t^i\left[\left(1 +
      \frac{H+1}{\tau(r)}\right)^{t-\tau(r)} - 1\right]\\ 
    & \leq \alpha_t^i\left[\left(1 +
      \frac{H+1}{\tau(r)}\right)^{\tau(r+1)-\tau(r)-1} - 1\right]
      \stackrel{\text{(i)}}{\le}
      \alpha_t^i\left[\left(1 + \frac{H+1}{\tau(r)}\right)^{\eta \tau(r)} -
      1\right] \\
    & \stackrel{\text{(ii)}}{\le} \alpha_t^i\left[e^{\eta (H+1)} - 1\right]
      \le \alpha_t^i\cdot 2\eta(H+1).
  \end{align*}  

  In the above,
  (i) holds as we have
  \begin{equation*}
    \tau(r+1) - 1 - \tau(r) = \ceil{(1+\eta)^{r+1}} - 1 -
    \ceil{(1+\eta)^r} \le (1+\eta)^{r+1} -
    (1+\eta)^r \le \eta\tau(r),
  \end{equation*}
  and (ii) holds whenever $\eta(H+1)\le 1/2$. Choosing
  \begin{align*}
    \eta = \frac{1}{2H(H+1)}~~~{\rm and}~~~r_\star =
    \ceil{\frac{\log(10H^2)}{\log(1+\eta)}} \le 
    8H^2\log(10H^2), 
  \end{align*}
  the above requirement will be satisfied. Therefore we have
  \begin{align*}
    \wt{S}_i - S_i \le 2\eta(H+1) \sum_{r\ge \max\set{r_\star, r(i)}}
    \sum_{t=\tau(r)}^{\tau(r+1)-1} \alpha_t^i  \le 2\eta(H+1)
    \sum_{t=i}^\infty \alpha_t^i = \frac{1}{H}S_i,
  \end{align*}
  and thus
  \begin{align*}
    \wt{S}_i \le \left(1+\frac{1}{H}\right)S_i \le 1 + \frac{3}{H}.
  \end{align*}
\end{proof}

We are now in position to prove the main theorem.

\begin{reptheorem}{theorem:ql-ucb2}[Q-learning with UCB2H, restated]
Choosing $\eta=\frac{1}{2H(H+1)}$ and
  $r_\star=\ceil{\frac{\log(10H^2)}{\log(1+\eta)}}$,
  with probability at least $1-p$, the regret of
  Algorithm~\ref{algorithm:ql-ucb2} is bounded by
  $O(\sqrt{H^4SAT \ell})$, where $\ell\defeq \log(SAT/p)$ is a log factor.
  Further, the local switching cost is bounded as
  $N_\sw\le O(H^3SA\log(K/A))$.
\end{reptheorem}

\begin{proof-of-theorem}[\ref{theorem:ql-ucb2}]
  The proof consists of two parts: upper bounding the regret, and
  upper bounding the local switching cost.
  
  \paragraph{Part I: Regret bound}
By Lemma~\ref{lemma:max-error}, we have
\begin{equation*}
  \wt{\delta}_h^k \le \left(\wt{Q}_h^{k'} - Q_h^{\pi_k} + \left[\wt{Q}_h^k -
      \wt{Q}_h^{k'}\right]_+ \right)(x_h^k, a_h^k).
\end{equation*}
Applying Lemma~\ref{lemma:optimistic-and-accurate} with the
$k'=k_{\tau_\last(t)+1}$-th episode (so that there are $\tau_\last(t)$
visitations to $(x_h^k, a_h^k)$ prior to the $k'$-th episode), we have
the bound
\begin{equation}
  \label{equation:one-step-error}
  \begin{aligned}
    & \quad \left(\wt{Q}_h^{k'} - Q_h^{\pi_k}\right)(x_h^k, a_h^k) \le
    \left(\wt{Q}_h^{k'} - Q_h^\star\right)(x_h^k, a_h^k) + 
    \left(Q_h^\star - Q_h^{\pi_k}\right)(x_h^k, a_h^k) \\
    & \le \alpha_{\tau_\last(t)}^0H
    + \sum_{i=1}^{\tau_\last(t)}
    \alpha_{\tau_\last(t)}^i\wt{\phi}_{h+1}^{k_i} +
    \beta_{\tau_\last(t)}  - 
    \wt{\phi}_{h+1}^k + \wt{\delta}_{h+1}^k + \xi_{h+1}^k,
  \end{aligned}
\end{equation}
where we recall that $\beta_t=2\sum_i
\alpha_t^ib_i=\Theta(\sqrt{H^3\ell/t})$ and $\xi_{h+1}^k\defeq [(
    \what{\P}_h^{k_i} - \P_h )(V_{h+1}^\star - V_{h+1}^{\pi_k}](x_h^k,a_h^k)$.
By Lemma~\ref{lemma:perturbation-bound}, the perturbation term
$[\wt{Q}_h^k - \wt{Q}_h^{k'}]_+$ can be bounded as
\begin{equation}
  \label{equation:perturbation-error}
  [\wt{Q}_h^k - \wt{Q}_h^{k'}]_+(x_h^k, a_h^k) \le \beta_t + 
  \sum_{i=\tau_\last(t)+1}^t \alpha_t^i \wt{\phi}_{h+1}^{k_i} + C\sqrt{\frac{H^3\ell}{t}}.
\end{equation}
We now study the effect of adding~\eqref{equation:perturbation-error} 
onto~\eqref{equation:one-step-error}.
The term $C\sqrt{H^3\ell/t}$ in~\eqref{equation:perturbation-error}
and~$\beta_{\tau_\last(t)}$ in~\eqref{equation:one-step-error} can be
both absorbed into $\beta_t$ (as
$\beta_t\ge 2\sqrt{H^3\ell/t}$ and $\beta_{\tau_\last(t)}\le
\sqrt{1+\eta}\beta_t$), so these together is bounded by $C'\beta_t$
where $C'$ is an absolute constant.  

Adding~\eqref{equation:perturbation-error}
onto~\eqref{equation:one-step-error}, we obtain
\begin{align*}
  \wt{\delta}_h^k
  \le \underbrace{\alpha_{\tau_\last(t)}^0 H}_{\rm I} +
  \underbrace{\sum_{i=1}^{\tau_\last(t)}\alpha_{\tau_\last(t)}^i\wt{\phi}_{h+1}^{k_i} +
  \sum_{i=\tau_\last(t)+1}^t\alpha_t^i
  \wt{\phi}_{h+1}^{k_i}}_{\rm II}  + C'\beta_t - 
  \wt{\phi}_{h+1}^k + \wt{\delta}_{h+1}^k + \xi_{h+1}^k.
\end{align*}
We now sum the above bound over $k\in[K]$. For term I, it equals $H$
only when $\tau_\last(t)=0$, which happens only if $t=0$, so the sum
over $k$ is upper bounded by $SAH$.


For term II, we consider the coefficient in front of
$\wt{\phi}_{h+1}^{k'}$ for each $k' \in [K]$ when summing over
$k$. Let $n_h^k$ denote the number of visitations to $(x_h^k, a_h^k)$
prior to the $k$-th episode. For each $k'$, $\wt{\phi}_{h+1}^{k'}$ is
counted if $i = n_h^{k'}$ and $(x_h^k, a_h^k) =
(x_h^{k'},a_h^{k'})$. We use $t$ to denote $n_h^{k}$, then an 
$\alpha_{\tau_\last(t)}^{n_h^{k'}}$ appears if $\tau_\last(t) \ge n_h^{k'}$, and
an $\alpha_{t}^{n_h^{k'}}$ appears if
$\tau_\last(t)+1\le n_h^{k'}\le t$. So the total coefficient in front
of $\wt{\phi}_{h+1}^{k'}$ is at most
\begin{equation*}
  \sum_{t:t\ge n_h^{k'}, \tau_\last(t)\le n_h^{k'}-1} \alpha_t^{n_h^{k'}} + \sum_{t:\tau_\last(t)\ge n_h^{k'}}
  \alpha_{\tau_\last(t)}^{n_h^{k'}},
\end{equation*}
for each $k' \in [K]$.
Choosing $\eta=\frac{1}{2H(H+1)}$ and
$r_\star=\ceil{\frac{\log(10H^2)}{\log(1+\eta)}}$, applying
Lemma~\ref{lemma:double-counting}, the above is upper bounded by
$1+3/H$.

For the remaining terms, we can adapt the proof of Theorem 1
in~\cite{jin2018q} and obtain a propagation of error inequality, and
deduce (as $(1+3/H)^H=O(1)$) that the regret is bounded by
$O(\sqrt{H^4SAT\ell})$. This concludes the proof.

\paragraph{Part II: Bound on local switching cost}
For each $(h,x)\in[H]\times\mc{S}$ and each action $a\in\mc{A}=[A]$,
either it is in stage I, which induces a switching cost of at most
$\tau(r_\star)$, or it is in stage II, which according to the
triggering sequence induces a switching cost of
\begin{equation*}
  \tau(r_\star) + r_a - r_\star \le \tau(r_\star) + r_a,
\end{equation*}
where $r_a$ is the final index for action $a$ satisfying
\begin{equation*}
  \sum_{a=1}^A \ceil{(1+\eta)^{r_a}} \le K + H,
\end{equation*}
(define $r_a=0$ if action $a$ has not reached the second stage.)
Applying Jensen's inequality gives that
\begin{align*}
  \sum_{a=1}^A r_a \le
  \frac{A\log((K+H)/A)}{\log(1+\eta)}
  = O\left( H^2A\log(K/A) \right)
\end{align*}
So the switching cost for $(h,x)$ can be bounded as
\begin{align*}
  & \quad A\tau(r_\star) + \sum_{a=1}^A r_a \\
  & \le A\ceil{(1+\eta)^{r_\star}} +
    O\left( H^2A\log(K/A) \right) \le A\ceil{(1+\eta)\cdot 10H^2} +
    O\left( H^2A\log(K/A) \right) \\
  & \le 20H^2A + O\left( H^2A\log(K/A) \right) =
    O\left(H^2A\log(K/A)\right).
\end{align*}
Multiplying the above by $HS$ (the number of $(h,x)$ pairs) gives the
desired bound.
\end{proof-of-theorem}


\section{Q-learning with UCB2-Bernstein exploration}
\label{appendix:proof-ql-bernstein}

\subsection{Algorithm description}
We present the algorithm, Q-Learning with UCB2-Bernstein (UCB2B)
exploration, in Algorithm~\ref{algorithm:ql-ucb2-Bernstein} below.

\begin{algorithm*}[ht]
   \caption{Q-learning with UCB2-Bernstein (UCB2B) Exploration}
   \label{algorithm:ql-ucb2-Bernstein}
\begin{algorithmic}
   \INPUT Parameter $\eta\in(0,1)$, $r_\star\in\Z_{>0}$, and $c>0$.
   \STATE {\bfseries Initialize:} $\wt{Q}_h(x,a)\setto H$,
   $Q_h\setto \wt{Q}_h$, $N_h(x,a)\setto 0$ for all
   $(x,a,h)\in\mc{S}\times \mc{A}\times [H]$.
   \FOR{episode $k=1,\dots,K$}
   \STATE Receive $x_1$.
   \FOR{step $h=1,\dots, H$}
   \STATE Take action $a_h\setto \argmax_{a'}Q_h(x_h,a')$, and observe
   $x_{h+1}$.
   \STATE $t=N_h(x_h, a_h)\setto N_h(x_h, a_h) + 1$.
   \STATE $\mu_h(x_h, a_h) \setto \mu_h(s_h, a_h) + \wt{V}_{h + 1}(x_{h + 1})$.
   \STATE $\sigma_h(x_h, a_h) \setto \sigma_h(x_h, a_h) + \left( \wt{V}_{h + 1}(x_{h + 1}) \right)^2$.
   \STATE $W_t(x_h,a_h,h) = \frac{1}{t}\left(\sigma_h(x_h, a_h) - \left(\mu_h(x_h, a_h)\right)^2 \right)$.
   \STATE $\beta_t(x_h,a_h,h) \setto \min \left\{ c_1 \left( \sqrt{\frac{H}{t}(W_t(x_h,a_h,h) + H)\ell} + \frac{\sqrt{H^7 S A}\cdot\ell}{t} \right), c_2 \sqrt{\frac{H^3 \ell}{t}} \right\}$.
   \STATE $b_t \setto \frac{\beta_t(x_h,a_h,h) - (1 -
     \alpha_t)\beta_{t - 1}(x_h,a_h,h)}{2 \alpha_t}$ (Bernstein-type bonus).
   \STATE $\wt{Q}_h(x_h, a_h)\setto (1-\alpha_t)\wt{Q}_h(x_h, a_h) +
   \alpha_t[r_h(x_h, a_h) + \wt{V}_{h+1}(x_{h+1}) + b_t]$.
   \STATE $\wt{V}_h(x_h)\setto \min\set{H, \max_{a'\in\mc{A}}\wt{Q}_h(x_h,
     a')}$.
   {\color{red!80!black}
   \IF{$t \in \set{t_n}_{n\ge 1}$ (where $t_n$ is defined
     in~\eqref{equation:switching-sequence})}
   \STATE (Update policy) $Q_h(x_h,\cdot) \setto \wt{Q}_h(x_h,\cdot)$.
   \ENDIF}
   \ENDFOR
   \ENDFOR
\end{algorithmic}
\end{algorithm*}

\subsection{Proof of Theorem~\ref{theorem:ql-ucb2-Bernstein}}

We first present the analogs of Lemmas that we used in the proof of Theorem \ref{theorem:ql-ucb2}.

\begin{lemma}[$\wt{Q}$ is optimistic and accurate for the Bernstein case; Lemma C.1 \&
  C.4,~\cite{jin2018q}]
  \label{lemma:optimistic-and-accurate-Bernst}
  We have for all $(h,x,a,k)\in[H]\times\mc{S}\times\mc{A}\times[K]$
  that
  \begin{equation}
    \label{equation:q-diff-Bernst}
  \begin{aligned}
    & \wt{Q}_h^k(x,a) - Q_h^\star(x,a) \\
     = & \alpha_t^0(H - Q_h^\star(x,a)) + \\
    & \sum_{i=1}^t \alpha_t^i\bigg( r_h(x,a) +
    \wt{V}_{h+1}^{k_i}(x_{h+1}^{k_i}) - V^\star_{h+1}(x_{h+1}^{k_i})
    + \left[\left( \what{\P}_h^{k_i} - \P_h \right)V_{h+1}^\star\right](x,a) +
    b_i\bigg),
    \end{aligned}
  \end{equation}
  where $[\widehat{\P}_h^{k_i} V_{h + 1}](x,a) \defeq V_{h + 1}(x_{h+1}^{k_i})$.

  Further, with probability at least $1-p$, under the choice of
  $b_t$ and $\beta_t$ in Algorithm \ref{algorithm:ql-ucb2-Bernstein}, we have for all $(h,x,a,k)$ that
  \begin{align*}
    & 0\le \wt{Q}_h^k(x,a) - Q_h^\star(x,a)
      \le \alpha_t^0H +
      \sum_{i=1}^t\alpha_t^i(\wt{V}_{h+1}^{k_i} -
      V_{h+1}^\star)(x_{h+1}^{k_i}) + \beta_t.
  \end{align*}
\end{lemma}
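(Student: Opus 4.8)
The plan is to follow the two-part structure of Lemma~\ref{lemma:optimistic-and-accurate}, adapting only the parts that depend on the choice of bonus. The first identity~\eqref{equation:q-diff-Bernst} is bonus-independent: starting from the explicit formula $\wt{Q}_h^k(x,a) = \alpha_t^0 H + \sum_{i=1}^t \alpha_t^i(r_h(x,a) + \wt{V}_{h+1}^{k_i}(x_{h+1}^{k_i}) + b_i)$ obtained by unrolling the Q-learning update, I would subtract $Q_h^\star(x,a) = r_h(x,a) + [\P_h V_{h+1}^\star](x,a)$, use $\alpha_t^0 + \sum_{i=1}^t \alpha_t^i = 1$, and add and subtract $V_{h+1}^\star(x_{h+1}^{k_i})$ together with $[\what{\P}_h^{k_i}V_{h+1}^\star](x,a) = V_{h+1}^\star(x_{h+1}^{k_i})$ to regroup. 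This is verbatim the derivation of~\eqref{equation:q-diff}, so nothing new is needed.

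For the second part, I would prove optimism and accuracy simultaneously by downward induction on $h$, with the statement at level $h$ asserted for all $(x,a,k)$ at once. The algebraic fact driving the Bernstein bonus is that $b_t$ is defined so that the accumulated bonus telescopes: from $b_t = (\beta_t - (1-\alpha_t)\beta_{t-1})/(2\alpha_t)$ and $\sum_{i=1}^t \alpha_t^i b_i = \alpha_t b_t + (1-\alpha_t)\sum_{i=1}^{t-1}\alpha_{t-1}^i b_i$ one gets $2\sum_{i=1}^t \alpha_t^i b_i = \beta_t$. Substituting this into~\eqref{equation:q-diff-Bernst}, both target inequalities reduce to the single high-probability claim
\begin{equation*}
  \left| \sum_{i=1}^t \alpha_t^i \left[(\what{\P}_h^{k_i} - \P_h)V_{h+1}^\star\right](x,a) \right| \le \frac{\beta_t}{2} \qquad \text{for all } (h,x,a,k),
\end{equation*}
after which the lower bound follows from $H - Q_h^\star(x,a) \ge 0$ and the inductive hypothesis $\wt{V}_{h+1}^{k_i} \ge V_{h+1}^\star$, while the upper bound simply retains the $\wt{V}_{h+1}^{k_i} - V_{h+1}^\star$ terms.

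To establish the displayed bound I would apply Freedman's inequality to the weighted martingale $\sum_i \alpha_t^i [(\what{\P}_h^{k_i} - \P_h)V_{h+1}^\star](x,a)$, obtaining an estimate of order $\sqrt{\ell \sum_i (\alpha_t^i)^2 \mathrm{Var}_{\P_h(\cdot|x,a)}[V_{h+1}^\star]} + \ell H \max_i \alpha_t^i$, and then invoke Lemma~\ref{lemma:alphati}(b) ($\sum_i (\alpha_t^i)^2 \le 2H/t$, $\max_i \alpha_t^i \le 2H/t$) to reduce this to $O(\sqrt{(H\ell/t)\,\mathrm{Var}_{\P_h(\cdot|x,a)}[V_{h+1}^\star]} + H^2\ell/t)$. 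Since $\mathrm{Var}_{\P_h(\cdot|x,a)}[V_{h+1}^\star]$ is unobservable, I would replace it by the empirical quantity $W_t$ built from the running estimates $\wt{V}_{h+1}^{k_i}$: (i) concentrate the empirical second moment of $\wt{V}_{h+1}^{k_i}$ to its conditional expectation, and (ii) pay for the gap between $\wt{V}_{h+1}$ and $V_{h+1}^\star$, bounding $\tfrac1t\sum_{i\le t}(\wt{V}_{h+1}^{k_i} - V_{h+1}^\star)(x_{h+1}^{k_i})$ via the accuracy half of the induction at level $h+1$ together with a crude sublinear bound on $\sum_k (\wt{V}_{h+1}^k - V_{h+1}^\star)(x_{h+1}^k)$. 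This is exactly where the additive $\sqrt{H^7SA}\,\ell/t$ term in $\beta_t$ is spent, and the outer $\min\{\,\cdot\,, c_2\sqrt{H^3\ell/t}\}$ is harmless because the Azuma--Hoeffding bound $|\sum_i \alpha_t^i[\cdots]| \le c\sqrt{H^3\ell/t}$ always holds too (mirroring Lemma~\ref{lemma:optimistic-and-accurate}). A union bound over all $(h,x,a)$ and all visit counts $t\in[K]$ — each fixed-count partial sum being a genuine martingale — then gives uniformity with probability at least $1-p$; along the way one also checks, as in~\cite{jin2018q}, that $\beta_t$ varies slowly enough that $b_t$ is well-defined and not too negative.

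The main obstacle is step (ii) of the variance transfer: $W_t$ is assembled from the \emph{moving} estimates $\wt{V}_{h+1}^{k_i}$, which are random and correlated with the sampled transitions, so controlling $W_t$ against $\mathrm{Var}_{\P_h(\cdot|x,a)}[V_{h+1}^\star]$ forces a careful appeal to the inductive accuracy bound at step $h+1$ and to the sublinearity of $\sum_k(\wt{V}_{h+1}^k - V_{h+1}^\star)(x_{h+1}^k)$, with just enough slack to land at the claimed $\sqrt{H^7SA}\,\ell/t$ rate rather than something coarser. Everything else is bookkeeping that parallels the Hoeffding case above and the Bernstein analysis of~\cite{jin2018q}.
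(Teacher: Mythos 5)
Your proposal is correct and takes essentially the same route as the paper, which does not re-derive this lemma but imports it from Lemmas C.1 and C.4 of \cite{jin2018q}: the first identity is the bonus- and policy-independent unrolling of the update (as in Lemma~\ref{lemma:optimistic-and-accurate}), and your second part reproduces that reference's Bernstein argument (the telescoping $2\sum_{i=1}^t\alpha_t^i b_i=\beta_t$, a Freedman-type bound, and the empirical-variance transfer for $W_t$ backed up by the crude $c_2\sqrt{H^3\ell/t}$ cap in the $\min$). The one point worth stating explicitly is that the martingale-difference structure of $[(\what{\P}_h^{k_i}-\P_h)V_{h+1}^\star](x,a)$ holds for data generated by any adapted policy, so the delayed policy updates of Algorithm~\ref{algorithm:ql-ucb2-Bernstein} leave the argument untouched---which is exactly why the paper can cite the lemma verbatim.
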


The following Lemma is the analog of Lemma~\ref{lemma:max-error} in the Bernstein case.
\begin{lemma}[Max error under delayed policy update]
  \label{lemma:max-error-Bernst}
  We have
  \begin{equation}
    \label{equation:delayed-behavior-Bernst}
    \begin{aligned}
      \wt{\delta}_h^k
      \le \left(\max\set{\wt{Q}_h^{k'}, \wt{Q}_h^k} -
        Q_h^{\pi_k}\right)(x_h^k, a_h^k) = \left(\wt{Q}_h^{k'} -
        Q_h^{\pi_k} + \left[\wt{Q}_h^{k} - 
          \wt{Q}_h^{k'}\right]_+ \right)(x_h^k, a_h^k).
    \end{aligned}
  \end{equation}
  where $k'=k_{\tau_\last(t)+1}$ (which depends on $k$.) In particular,
  if $t=\tau_\last(t)$, then $k=k'$ and the upper bound reduces to
  $(\wt{Q}_h^{k'} - Q_h^{\pi_k})(x_h^k, a_h^k)$.
\end{lemma}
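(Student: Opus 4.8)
The plan is to observe that Lemma~\ref{lemma:max-error-Bernst} is structurally identical to Lemma~\ref{lemma:max-error}, and that the proof of the latter never invoked the specific form of the exploration bonus. So I would essentially replay the earlier argument verbatim, checking that each step only uses features shared by Algorithm~\ref{algorithm:ql-ucb2} and Algorithm~\ref{algorithm:ql-ucb2-Bernstein}.

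Concretely, first I would reduce to showing $\wt{V}_h^k(x_h^k) \le \max\set{\wt{Q}_h^k(x_h^k, a_h^k), \wt{Q}_h^{k'}(x_h^k, a_h^k)}$, using only that $V_h^{\pi_k}(x_h^k) = Q_h^{\pi_k}(x_h^k, a_h^k)$ by definition of the (greedy) policy $\pi_k$. Second, I would note $\wt{V}_h^k(x_h^k) = \min\set{H, \max_{a'}\wt{Q}_h^k(x_h^k, a')} \le \max_{a'}\wt{Q}_h^k(x_h^k, a')$, which holds because the $\wt{V}$-update line is identical in both algorithms. Third, I would use that, under the common triggering sequence~\eqref{equation:switching-sequence}, the delayed policy $Q_h(x_h^k, \cdot)$ equals $\wt{Q}_h^{k'}(x_h^k, \cdot)$ with $k' = k_{\tau_\last(t)+1}$ — this is a purely scheduling fact, unchanged by the bonus. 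Finally, I would run the two-case argument on $q_{\rm old} = \wt{Q}_h^{k'}(x_h^k, \cdot)$ and $q_{\rm new} = \wt{Q}_h^k(x_h^k, \cdot)$: since only the $a_h^k$-th coordinate is updated (as $a_h^k$ is the only action taken at $(h, x_h^k)$ in episode $k$), either $q_{\rm new}$ is maximized at $a_h^k$, giving $\wt{V}_h^k(x_h^k) \le q_{\rm new}(a_h^k)$, or it is maximized at some $a' \ne a_h^k$, in which case $\wt{V}_h^k(x_h^k) \le q_{\rm new}(a') = q_{\rm old}(a') \le \max_a q_{\rm old}(a) = \wt{Q}_h^{k'}(x_h^k, a_h^k)$. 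Combining the two cases yields the claimed max bound, and the final rewriting $\max\set{\wt{Q}_h^{k'}, \wt{Q}_h^k} = \wt{Q}_h^{k'} + [\wt{Q}_h^k - \wt{Q}_h^{k'}]_+$ is algebraic.

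There is essentially no obstacle here: the one thing worth double-checking is that the policy-update block (the \texttt{if $t \in \set{t_n}$} branch) and the $\wt{V}$-update are byte-for-byte the same in Algorithm~\ref{algorithm:ql-ucb2-Bernstein} as in Algorithm~\ref{algorithm:ql-ucb2} — which they are — so that the argument about which version of $\wt{Q}$ the action $a_h^k$ argmaxes carries over unchanged. Since the bonus $b_t$ only enters the $\wt{Q}$-update and plays no role in any inequality above, nothing else needs modification. I would therefore state the proof as ``identical to the proof of Lemma~\ref{lemma:max-error}, as that argument uses only the structure of the greedy policy, the $\wt{V}$-clipping, and the UCB2 triggering schedule, none of which depend on the choice of exploration bonus.''
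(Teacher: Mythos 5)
Your proposal is correct and coincides with the paper's treatment: the paper proves this lemma simply by noting it "can be adapted from the proof of Lemma~\ref{lemma:max-error}", and your replay of that argument—reduction via the greedy property of $\pi_k$, the $\wt{V}$-clipping bound, the scheduling fact $Q_h(x_h^k,\cdot)=\wt{Q}_h^{k'}(x_h^k,\cdot)$, and the two-case comparison of $q_{\rm old}$ and $q_{\rm new}$—is exactly that adaptation, with the correct observation that the bonus never enters.
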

The proof of Lemma~\ref{lemma:max-error-Bernst} can be adapted from the proof of Lemma~\ref{lemma:max-error}. The following Lemma is the analog of Lemma~\ref{lemma:perturbation-bound} in the Bernstein case.
\begin{lemma}[Perturbation bound on $(\wt{Q}_h^k - \wt{Q}_h^{k'})_+$]
  \label{lemma:perturbation-bound-Bernst}
  For any $k$ such that $k>k'$ (so that the perturbation term is
  non-zero), we have
  \begin{equation}
    \label{equation:perturbation-bound-Bernst}
    \begin{aligned}
      &  \left[\wt{Q}_h^{k} - \wt{Q}_h^{k'}\right]_+(x_h^k, a_h^k) \le
      \beta_t +
      \sum_{i=\tau_\last(t)+1}^t\alpha_t^i\wt{\phi}_{h+1}^{k_i} +
      \overline{\zeta}_h^k,
    \end{aligned}
  \end{equation}
  where
  \begin{align*}
    \overline{\zeta}_h^k \defeq \left|\sum_{i=\tau_\last(t)+1}^t
    \alpha_t^i[(\what{\P}_h^k - \P_h)V_{h+1}^\star](x_h^k,
    a_h^k) \right|.
  \end{align*}
\end{lemma}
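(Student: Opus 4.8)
The plan is to reproduce the proof of Lemma~\ref{lemma:perturbation-bound} almost verbatim, swapping in the Bernstein-specific ingredients only where needed. Fix $(h,k)$ with $k>k'$, let $t$ be the number of visits to $(x_h^k,a_h^k)$ prior to episode $k$, let $k'=k_{\tau_\last(t)+1}$, and suppress the arguments $(x_h^k,a_h^k)$ throughout. First I would unroll the $\wt{Q}$-update of Algorithm~\ref{algorithm:ql-ucb2-Bernstein} from the $k'$-th visit (visit count $\tau_\last(t)$) to the $k$-th visit (visit count $t$), which, exactly as in the Hoeffding case and using $\prod_{i=\tau_\last(t)+1}^t(1-\alpha_i)+\sum_{i=\tau_\last(t)+1}^t\alpha_t^i=1$, gives
\[
  \wt{Q}_h^k-\wt{Q}_h^{k'}=\sum_{i=\tau_\last(t)+1}^t\alpha_t^i\,d_i,\qquad d_i\defeq r_h+\wt{V}_{h+1}^{k_i}(x_{h+1}^{k_i})+b_i-\wt{Q}_h^{k'}.
\]
Adding and subtracting $Q_h^\star$, invoking the Bellman optimality equation for $Q_h^\star$, and using that $\wt{Q}_h^{k'}$ is optimistic — now via Lemma~\ref{lemma:optimistic-and-accurate-Bernst} in place of Lemma~\ref{lemma:optimistic-and-accurate} — yields the same one-step inequality $d_i\le b_i+\wt{\phi}_{h+1}^{k_i}+\zeta_i$ with $\zeta_i\defeq[(\what{\P}_h^{k_i}-\P_h)V_{h+1}^\star]$. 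Since $\wt{\phi}_{h+1}^{k_i}\ge 0$ by optimism of $\wt{V}$, taking $[\cdot]_+$ and using the triangle inequality gives
\[
  \bigl[\wt{Q}_h^k-\wt{Q}_h^{k'}\bigr]_+\le \Bigl[\sum_{i=\tau_\last(t)+1}^t\alpha_t^i b_i\Bigr]_+ + \sum_{i=\tau_\last(t)+1}^t\alpha_t^i\wt{\phi}_{h+1}^{k_i} + \Bigl|\sum_{i=\tau_\last(t)+1}^t\alpha_t^i\zeta_i\Bigr|,
\]
and the last term is exactly the claimed $\overline{\zeta}_h^k$.

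The only step that genuinely departs from the Hoeffding proof is bounding the bonus partial sum $\sum_{i=\tau_\last(t)+1}^t\alpha_t^i b_i$. In the Hoeffding case $b_i=c\sqrt{H^3\ell/i}>0$, so this partial sum is trivially at most $\sum_{i=1}^t\alpha_t^i b_i=\tfrac12\beta_t$; here the Bernstein bonus $b_t=\frac{\beta_t-(1-\alpha_t)\beta_{t-1}}{2\alpha_t}$ is not sign-definite, so one cannot simply drop terms. Instead I would use the defining recursion $\beta_t=(1-\alpha_t)\beta_{t-1}+2\alpha_t b_t$ (which holds per $(x_h^k,a_h^k,h)$ and per visit count, precisely as Algorithm~\ref{algorithm:ql-ucb2-Bernstein} maintains $\beta$) together with the identities $\alpha_t^i=(1-\alpha_t)\alpha_{t-1}^i$ for $i<t$ and $\alpha_t^t=\alpha_t$ to prove, by induction on $t$, the exact telescoping identity
\[
  2\sum_{i=j+1}^t\alpha_t^i b_i=\beta_t-\Bigl(\prod_{i=j+1}^t(1-\alpha_i)\Bigr)\beta_j\qquad\text{for every }j\le t.
\]
Taking $j=\tau_\last(t)$ and using $\beta_{\tau_\last(t)}\ge 0$ together with $\prod_{i=\tau_\last(t)+1}^t(1-\alpha_i)\in[0,1]$, this gives $\sum_{i=\tau_\last(t)+1}^t\alpha_t^i b_i\le\tfrac12\beta_t$, hence $\bigl[\sum_{i=\tau_\last(t)+1}^t\alpha_t^i b_i\bigr]_+\le\tfrac12\beta_t\le\beta_t$, which is exactly what the lemma needs. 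Substituting back into the displayed bound yields~\eqref{equation:perturbation-bound-Bernst}.

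I expect this last step — obtaining a one-sided bound on the bonus partial sum — to be the only real obstacle: because $\beta_t$ is defined recursively rather than in closed form, and because the Bernstein bonus $b_t$ can be negative (e.g.\ when the empirical-variance estimate drops sharply between consecutive visits to $(x_h^k,a_h^k,h)$), the naive argument fails, and the telescoping identity above is what salvages it. Everything else is a verbatim transcription of the Hoeffding proof. Note also that, as in Lemma~\ref{lemma:perturbation-bound} stated for the Bernstein case, no quantitative bound on $\overline{\zeta}_h^k$ is claimed here: the variance-aware (Bernstein-type) control of $\overline{\zeta}_h^k$ is deferred to the main regret argument for Algorithm~\ref{algorithm:ql-ucb2-Bernstein}, so the present lemma only records the decomposition.
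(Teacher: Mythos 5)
Your proposal is correct and follows essentially the same route as the paper, which simply adapts the Hoeffding-case proof of Lemma~\ref{lemma:perturbation-bound} (same unrolling of the update, same $d_i$ bound via Bellman optimality and optimism from Lemma~\ref{lemma:optimistic-and-accurate-Bernst}, with the quantitative control of $\overline{\zeta}_h^k$ deferred to the regret proof). Your telescoping identity $2\sum_{i=j+1}^t\alpha_t^i b_i=\beta_t-\bigl(\prod_{i=j+1}^t(1-\alpha_i)\bigr)\beta_j$ is a correct and worthwhile addition: it justifies the step $\sum_{i=\tau_\last(t)+1}^t\alpha_t^i b_i\le\beta_t$ that the paper leaves implicit, which genuinely needs an argument here since the Bernstein bonus $b_t$ is not sign-definite.
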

The proof of Lemma~\ref{lemma:perturbation-bound-Bernst} can be adapted from the proof of Lemma~\ref{lemma:perturbation-bound}, but we used a finer bound on the summation $\overline{\zeta}_h^k$ over $k \in [K]$ in the proof of Theorem \ref{theorem:ql-ucb2-Bernstein}.

\begin{lemma}[Variance is bounded and $W_t$ is accurate; Lemma C.5 \&
  C.6,~\cite{jin2018q}]
\label{lemma:variance-bound-Bernst}
There exists an absolute constant $c$, such that
\begin{align}
\sum_{k = 1}^{K}\sum_{h = 1}^{H} \mathbb V_h V_{h+1}^{\pi_k}(x_h^k,a_h^k) \leq c(HT + H^3 \ell),
\end{align}
w.p. at least $(1 - p)$.

Further, w.p. at least $(1 - 4p)$, there exists an absolute constant $c > 0$ such that, letting $(x,a) = (x_h^k, a_h^k)$ and $t = n_h^k = N_h^k(x,a)$, we have
\begin{align}
W_t(x,a,h) \leq \mathbb V_h V_{h+1}^{\pi_k}(x,a) + 2H(\wt{\delta}_{h+1}^k + \xi_{h+1}^k) + c\left( \frac{SA\sqrt{H^7 \ell}}{t} + \sqrt{\frac{SAH^7 \ell}{t}} \right)
\end{align}
for all $(k,h) \in [K] \times [H]$, where the variance operator $\mathbb V_h$ is defined by
\begin{align}
[\mathbb V_h V_{h+1}](x,a) \coloneqq \var_{x' \sim \mathbb P_h(\cdot|x,a)}(V_{h+1}(x')) = \mathbb E_{x' \sim \mathbb P_h(\cdot|x,a)}\left[V_{h+1}(x') - [\mathbb P_h V_{h+1}](x,a)\right]^2.
\end{align}

\end{lemma}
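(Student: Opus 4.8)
The plan is to prove both parts by adapting the proofs of Lemmas~C.5 and~C.6 of~\cite{jin2018q}, checking at each step that the delayed policy updates of Algorithm~\ref{algorithm:ql-ucb2-Bernstein} do not interfere.

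\textbf{Part 1 (total variance bound).} Fix an episode $k$ and condition on $x_1^k$. Since $\pi_k$ is executed throughout episode $k$, the pathwise Bellman identity $V_h^{\pi_k}(x_h^k)=r_h(x_h^k,a_h^k)+[\P_h V_{h+1}^{\pi_k}](x_h^k,a_h^k)$ and the law of total variance give
\begin{equation*}
  \E\!\left[\left(\sum_{h=1}^H r_h(x_h^k,a_h^k)-V_1^{\pi_k}(x_1^k)\right)^{\!2}\,\middle|\,x_1^k\right]=\sum_{h=1}^H\E\!\left[\mathbb V_h V_{h+1}^{\pi_k}(x_h^k,a_h^k)\,\middle|\,x_1^k\right].
\end{equation*}
The left-hand side is at most $H^2$ because the return lies in $[0,H]$, so the conditional expectation of the per-episode variance sum is at most $H^2$, and summing over $k$ gives $\sum_k\E[\,\cdot\mid\mathcal F_{k-1}]\le KH^2=HT$. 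The centered quantities $\sum_h\mathbb V_h V_{h+1}^{\pi_k}(x_h^k,a_h^k)-\E[\,\cdot\mid\mathcal F_{k-1}]$ form a martingale difference sequence bounded pathwise by $H^3$, so Azuma--Hoeffding (or Freedman for the sharper constant) yields $\sum_{k,h}\mathbb V_h V_{h+1}^{\pi_k}(x_h^k,a_h^k)\le c(HT+H^3\ell)$ with probability $1-p$. This part is literally that of~\cite{jin2018q}: $\mathbb V_h V_{h+1}^{\pi_k}$ depends only on the \emph{played} policy and the MDP, and the way $\pi_k$ is chosen (delayed or not) enters only through the law of the trajectory.

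\textbf{Part 2 ($W_t$ is accurate).} Let $f_i\defeq\wt V_{h+1}^{k_i}(x_{h+1}^{k_i})$ denote the $t=n_h^k$ samples accumulated into $\mu_h(x,a),\sigma_h(x,a)$ before episode $k$, so that $W_t(x,a,h)=\frac1t\sum_i f_i^2-\big(\frac1t\sum_i f_i\big)^2=\min_c\frac1t\sum_i(f_i-c)^2$. Choosing $c=[\P_h V_{h+1}^{\pi_k}](x,a)$ and writing $f_i=V_{h+1}^{\pi_k}(x_{h+1}^{k_i})+\big(f_i-V_{h+1}^{\pi_k}(x_{h+1}^{k_i})\big)$ with $0\le f_i-V_{h+1}^{\pi_k}(x_{h+1}^{k_i})\le H$ (by optimism of $\wt Q$ and $V^\star\ge V^{\pi_k}$), the inequality $(f_i-c)^2\le 2\big(V_{h+1}^{\pi_k}(x_{h+1}^{k_i})-c\big)^2+2H\big(f_i-V_{h+1}^{\pi_k}(x_{h+1}^{k_i})\big)$ decomposes $W_t$ into (a) the empirical second moment of the centered samples $V_{h+1}^{\pi_k}(x_{h+1}^{k_i})-c$, which Azuma--Hoeffding controls by $\mathbb V_h V_{h+1}^{\pi_k}(x,a)$ up to a $\wt O(H^2\sqrt{\ell/t})$ fluctuation, and (b) the averaged discrepancy $\tfrac{2H}{t}\sum_i(\wt V_{h+1}^{k_i}-V_{h+1}^{\pi_k})(x_{h+1}^{k_i})$. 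For (b) one expands $\wt V_{h+1}^{k_i}-V_{h+1}^{\pi_k}$ using Lemmas~\ref{lemma:optimistic-and-accurate-Bernst}, \ref{lemma:max-error-Bernst} and~\ref{lemma:perturbation-bound-Bernst}, obtaining running-estimate error terms of the form $\wt\delta_{h+1}^{\cdot}$, martingale terms of the form $\xi_{h+1}^{\cdot}$, and a cumulative stepsize-weighted remainder; invoking the delayed-update accumulation bound of Lemma~\ref{lemma:double-counting} and the stepsize estimates of Lemma~\ref{lemma:alphati} converts the remainder into the claimed $\wt O\big(\sqrt{SAH^7\ell/t}+SA\sqrt{H^7\ell}/t\big)$ terms, the extra $SA$ and large powers of $H$ coming from unrolling the per-visit errors through the $h$-recursion and a union bound over $(x,a,h,k)$, and leaving a leading $2H(\wt\delta_{h+1}^k+\xi_{h+1}^k)$. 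Taking a union over all $(k,h)$ absorbs the failure events into the $1-4p$.

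\textbf{Main obstacle.} The subtle step is (b) in the delayed setting: $W_t$ is built entirely from the running estimates $\wt V$ (updated every episode) whereas the samples it averages were generated along trajectories driven by the \emph{delayed} policy network $Q=\wt Q^{k'}$, so the discrepancy term couples estimates from episodes $k_i$ with the current policy $\pi_k$ in a way not present in~\cite{jin2018q}. Handling it requires the perturbation analysis of Section~\ref{section:proof-propagation-error}, and in particular a finer accounting of $\overline\zeta_h^k$ summed over $k$ than the crude $O(\sqrt{H^3\ell/t})$ bound used in the Hoeffding proof (as flagged after Lemma~\ref{lemma:perturbation-bound-Bernst}), so that the error-propagation factor remains $1+O(1/H)$ and the extra $W_t$-error does not inflate the $\wt O(\sqrt{H^3SAT})$ regret of Theorem~\ref{theorem:ql-ucb2-Bernstein}.
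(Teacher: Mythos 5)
The paper itself gives no proof of this lemma: it is imported wholesale from Lemmas C.5 and C.6 of \cite{jin2018q}, the implicit justification being that the running estimates $\wt{Q},\wt{V}$ are updated by exactly the same rule as in the non-delayed algorithm (cf.\ the remark after Lemma~\ref{lemma:optimistic-and-accurate}: the aggregated expression for $\wt{Q}_h^k-Q_h^\star$ is independent of how actions are chosen), while the executed policy $\pi_k$ enters the statement only through $\mathbb{V}_h V_{h+1}^{\pi_k}$ and the term $2H(\wt{\delta}_{h+1}^k+\xi_{h+1}^k)$, which in the original proof arises solely from optimism ($V_{h+1}^\star\le\wt{V}_{h+1}^k$) and the definitions of $\wt{\delta},\xi$. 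Your proposal instead re-derives both parts, which is a legitimate and more explicit route. Part 1 is indeed schedule-agnostic and matches the cited argument, with one small slip: plain Azuma--Hoeffding on increments bounded by $H^3$ yields an additive term of order $H^3\sqrt{K\ell}$, i.e.\ $HT+H^4\ell$ after AM--GM; getting $H^3\ell$ requires a Freedman/Bernstein-type martingale inequality or Jin et al.'s specific decomposition, so this is more than a ``sharper constant.''

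For Part 2 your decomposition of $W_t$ is the right one, but the sketch is loosest exactly where the delayed schedule could in principle matter: the averaged discrepancy $\frac{2H}{t}\sum_i(\wt{V}_{h+1}^{k_i}-V_{h+1}^\star)(x_{h+1}^{k_i})$ is what produces the $SA\sqrt{H^7\ell}/t+\sqrt{SAH^7\ell/t}$ terms, and controlling it uniformly over $(x,a,h,t)$ is the one step that needs checking; you gesture at Lemmas~\ref{lemma:max-error-Bernst}, \ref{lemma:perturbation-bound-Bernst} and \ref{lemma:double-counting} without saying how that per-visit sum is actually bounded, whereas the paper's stance is that this step carries over verbatim from \cite{jin2018q} because it concerns only the running estimates. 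Moreover, your ``main obstacle'' paragraph conflates this lemma with the regret proof of Theorem~\ref{theorem:ql-ucb2-Bernstein}: the finer accounting of $\sum_k\overline{\zeta}_h^k$ flagged after Lemma~\ref{lemma:perturbation-bound-Bernst}, and keeping the error-propagation factor at $1+O(1/H)$, are tasks of the theorem's proof (where the paper also uses the crude bound $\sum_k\wt{\delta}_h^k=O(\sqrt{H^4SAT\ell})$ and the summation of $W_{\tau_\last(n_h^k)}$), not of establishing the pointwise accuracy of $W_t$. So: right source and right skeleton, but the single delicate step is left as a gesture and the discussion of difficulties points at the wrong proof.
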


Now, it is ready to present the proof of Theorem \ref{theorem:ql-ucb2-Bernstein}.

\begin{reptheorem}{theorem:ql-ucb2-Bernstein}[Q-learning with UCB2B, restated]
Choosing $\eta=\frac{1}{2H(H+1)}$ and
  $r_\star=\ceil{\frac{\log(10H^2)}{\log(1+\eta)}}$,
  with probability at least $1-p$, the regret of
  Algorithm~\ref{algorithm:ql-ucb2-Bernstein} is bounded by
  $O(\sqrt{H^3 S A T \ell^2} + \sqrt{S^3 A^3 H^9 \ell^4})$, where $\ell\defeq \log(SAT/p)$ is a log factor. Further, the local switching cost is bounded as $N_\sw\le O(H^3SA\log(K/A))$.
\end{reptheorem}

\begin{proof-of-theorem}[\ref{theorem:ql-ucb2-Bernstein}]

By Lemma~\ref{lemma:max-error-Bernst}, we have
\begin{equation*}
  \wt{\delta}_h^k \le \left(\wt{Q}_h^{k'} - Q_h^{\pi_k} + \left[\wt{Q}_h^k -
      \wt{Q}_h^{k'}\right]_+ \right)(x_h^k, a_h^k).
\end{equation*}
Applying Lemma~\ref{lemma:optimistic-and-accurate-Bernst} with the
$k'=k_{\tau_\last(t)+1}$-th episode (so that there are $\tau_\last(t)$
visitations to $(x_h^k, a_h^k)$ prior to the $k'$-th episode), we have
the bound
\begin{equation}
  \label{equation:one-step-error-Bernst}
  \begin{aligned}
    & \quad \left(\wt{Q}_h^{k'} - Q_h^{\pi_k}\right)(x_h^k, a_h^k) \le
    \left(\wt{Q}_h^{k'} - Q_h^\star\right)(x_h^k, a_h^k) + 
    \left(Q_h^\star - Q_h^{\pi_k}\right)(x_h^k, a_h^k) \\
    & \le \alpha_{\tau_\last(t)}^0H
    + \sum_{i=1}^{\tau_\last(t)}
    \alpha_{\tau_\last(t)}^i\wt{\phi}_{h+1}^{k_i} +
    \beta_{\tau_\last(t)}  - 
    \wt{\phi}_{h+1}^k + \wt{\delta}_{h+1}^k + \xi_{h+1}^k.
  \end{aligned}
\end{equation}
By Lemma~\ref{lemma:perturbation-bound-Bernst}, the perturbation term
$[\wt{Q}_h^k - \wt{Q}_h^{k'}]_+$ can be bounded as
\begin{equation}
  \label{equation:perturbation-error-Bernst}
  [\wt{Q}_h^k - \wt{Q}_h^{k'}]_+(x_h^k, a_h^k) \le \beta_t + 
  \sum_{i=\tau_\last(t)+1}^t \alpha_t^i \wt{\phi}_{h+1}^{k_i} + \overline{\zeta}_h^k.
\end{equation}
Thus, adding~\eqref{equation:perturbation-error-Bernst}
onto~\eqref{equation:one-step-error-Bernst}, we obtain
\begin{align}
  \wt{\delta}_h^k \le & \underbrace{\alpha_{\tau_\last(t)}^0 H}_{\rm I} +
  \underbrace{\sum_{i=1}^{\tau_\last(t)}\alpha_{\tau_\last(t)}^i\wt{\phi}_{h+1}^{k_i} +
  \sum_{i=\tau_\last(t)+1}^t\alpha_t^i
  \wt{\phi}_{h+1}^{k_i}}_{\rm II} + \underbrace{\overline{\zeta}_h^k}_{\rm III} \\
  & + \underbrace{\beta_{\tau_\last(t)}}_{\rm IV} + \underbrace{\xi_{h+1}^k}_{V} - 
  \wt{\phi}_{h+1}^k + \wt{\delta}_{h+1}^k + \beta_t.
\end{align}

We now sum the above bound over $k\in[K]$ and $h \in [H]$. For term I, it equals $H$ only when $\tau_\last(t)=0$, which happens only if $t=0$, so the sum over $k$ is upper bounded by $SAH$.

For term II, we follow the same argument in the proof of Theorem \ref{theorem:ql-ucb2} and obtain:
\begin{align}
\sum_{k = 1}^{K}\left( \sum_{i=1}^{\tau_\last(t)}\alpha_{\tau_\last(t)}^i\wt{\phi}_{h+1}^{k_i} + \sum_{i=\tau_\last(t)+1}^t\alpha_t^i \wt{\phi}_{h+1}^{k_i} \right) \leq \left( 1+ \frac{3}{H}\right) \sum_{k = 1}^{K} \wt{\phi}_{h + 1}^k
\end{align}

For term III, we first apply the Azuma-Hoeffding inequality to get that
\begin{align}
  & \overline{\zeta}_h^k \le c\sqrt{\sum_{i=\tau_\last(t)+1}^t
    (\alpha_t^i)^2 H^2\ell}
\end{align}
uniformly over $(h,k)$, then we sum it the above  over $k\in[K]$, and then we obtain
\begin{align}
  \sum_{k = 1}^{K} \overline{\zeta}_h^k \le & cH\sqrt{\ell}\sum_{k = 1}^{K} \sqrt{\sum_{i=\tau_\last(t)+1}^t
    (\alpha_t^i)^2} \leq cH\sqrt{\ell}\sum_{k = 1}^{K} \sqrt{\sum_{i=\ceil{\frac{n_h^k}{1+\eta}}}^{n_h^k}
    \left(\alpha_{n_h^k}^i\right)^2} \\
    \leq & cH\sqrt{\ell}\sum_{k = 1}^{K} \sqrt{\left(n_h^k - \ceil{\frac{n_h^k}{1+\eta}}\right)\left(\max_{i \in [n_h^k]}\alpha_{n_h^k}^i\right)^2} \\
    \leq & cH\sqrt{\ell}\sum_{k = 1}^{K} \sqrt{\eta n_h^k\frac{4H^2}{(n_h^k)^2}} \\
    \leq & cH\sqrt{\ell}\sum_{k = 1}^{K} \sqrt{\frac{1}{n_h^k}} \stackrel{\text{(i)}}{=} cH\sqrt{\ell} \sum_{x,a} \sum_{n = 1}^{N_h^k(s,a)}\sqrt{\frac{1}{n}} \stackrel{\text{(ii)}}{\le} cH \sqrt{SAK\ell},
    \label{eq:Bernst_term3_lastline}
\end{align}
where (i) follows the fact $\sum_{s,a}N_h^K(x,a) = K$, and (ii) follows the property that the LHS of (ii) is maximized when $N_h^K(x,a) = K/SA$ for all $x,a$.

For term IV, we have
\begin{align}
\label{eq:Bernst_term_4}
\sum_{k = 1}^{K}\sum_{h = 1}^H \beta_{\tau_\last(n_h^k)} \leq c_1 \sum_{k = 1}^{K}\sum_{h = 1}^H\left( \sqrt{\frac{H}{\tau_\last(n_h^k)}(W_{\tau_\last(n_h^k)}(x,a,h) + H)\ell} + \frac{\sqrt{H^7 S A}\cdot\ell}{\tau_\last(n_h^k)} \right)
\end{align}
by our choice of $\beta_t$ in Algorithm \ref{algorithm:ql-ucb2-Bernstein}.
We first upper bound summation the $W_{\tau_\last(n_h^k)}(x,a,h)$ term as follows
\begin{align}
& \sum_{k = 1}^{K}\sum_{h = 1}^H W_{\tau_\last(n_h^k)}(x,a,h) \\
\stackrel{\text{(i)}}{\le} & \sum_{k = 1}^{K}\sum_{h = 1}^H \left[ \mathbb V_h V_{h+1}^{\pi_k}(x,a) + 2H(\delta_{h+1}^k + \xi_{h+1}^k) + c\left( \frac{SA\sqrt{H^7 \ell}}{\tau_\last(n_h^k)} + \sqrt{\frac{SAH^7 \ell}{\tau_\last(n_h^k)}} \right) \right]\\
\stackrel{\text{(ii)}}{\le} & \sum_{k = 1}^{K}\sum_{h = 1}^H \left[ \mathbb V_h V_{h+1}^{\pi_k}(x,a) + 2H(\delta_{h+1}^k + \xi_{h+1}^k) + c(1+\eta)\left( \frac{SA\sqrt{H^7 \ell}}{n_h^k} + \sqrt{\frac{SAH^7 \ell}{n_h^k}} \right)\right] \\
\stackrel{\text{(iii)}}{\le} & \sum_{k = 1}^{K}\sum_{h = 1}^H \left[ \mathbb V_h V_{h+1}^{\pi_k}(x,a) + 2H(\delta_{h+1}^k + \xi_{h+1}^k) \right] + c(1+\eta)\left( S^2 A^2 \sqrt{H^9 \ell^3} + SA \sqrt{H^8 T \ell} \right) \\
\stackrel{\text{(iv)}}{\le} & 2H \sum_{k = 1}^{K}\sum_{h = 1}^H (\delta_{h+1}^k + \xi_{h+1}^k) + c'\left(HT + H^3 \ell + S^2 A^2 \sqrt{H^9 \ell^3} + SA \sqrt{H^8 T \ell}\right),
\label{eq:bound_w_terms}
\end{align}
where inequalities (i) and (iv) follow from Lemma~\ref{lemma:variance-bound-Bernst}, inequality (ii) follows from $\tau_\last(n_h^k) \geq n_h^k/(1 + \eta)$, and inequality (iii) uses the properties that $\sum_{k = 1}^K (n_h^k)^{-1}$ and $\sum_{k = 1}^K (n_h^k)^{-1/2}$ are maximized when $N_h^K(x,a) = K/SA$ for all $x,a$ (similar to \eqref{eq:Bernst_term3_lastline}).

We now consider the first term in \eqref{eq:bound_w_terms}. By the Azuma-Hoeffding inequality, we have
\begin{align}
\left| \sum_{h' = h}^{H}\sum_{k = 1}^{K} \xi_{h'+1}^k \right| \leq \left| \sum_{h' = h}^{H}\sum_{k = 1}^{K} [(
    \what{\P}_{h'}^{k_i} - \P_h )(V_{h'+1}^\star - V_{h'+1}^{\pi_k})](x_{h'}^k,a_{h'}^k) \right| \leq O(H\sqrt{T \ell}),
\label{eq:Bernst_initialbound_xi}
\end{align}
w.p. $1 - p$ for all $h \in [H]$. Recall $\beta_t(x,a,h) \leq c\sqrt{H^3 \ell / t}$, we can simply obtain
\begin{align}
\sum_{k = 1}^K \delta_h^k \leq O(\sqrt{H^4 SAT \ell}),
\label{eq:Bernst_initialbound_delta}
\end{align}
for all $h \in [H]$ by adapting the proof of Theorem \ref{theorem:ql-ucb2}. Then, using \eqref{eq:Bernst_initialbound_xi} and \eqref{eq:Bernst_initialbound_delta}, we obtain the upper bound of the summation of $W_{\tau_\last(n_h^k)}(x,a,h)$ term for $h \in [H]$ and $k \in [K]$
\begin{align}
& \sum_{k = 1}^{K}\sum_{h = 1}^H W_{\tau_\last(n_h^k)}(x,a,h) \\
& 2H \sum_{k = 1}^{K}\sum_{h = 1}^H (\delta_{h+1}^k + \xi_{h+1}^k) + c'\left(HT + H^3 \ell + S^2 A^2 \sqrt{H^9 \ell^3} + SA \sqrt{H^8 T \ell}\right) \\
\leq & O\left( HT + S^2 A^2 H^7 \ell + S^2 A^2 \sqrt{H^9 \ell^3} \right).
\label{eq:upper_bound_w}
\end{align}

Now it is ready to upper bounded the summation of the first term in \eqref{eq:Bernst_term_4},
\begin{align}
& \sum_{k = 1}^{K}\sum_{h = 1}^H \sqrt{\frac{H}{\tau_\last(n_h^k)}(W_{\tau_\last(n_h^k)}(x,a,h) + H)\ell} \\
\stackrel{\text{(i)}}{\le} & \sqrt{\left( \sum_{k = 1}^{K}\sum_{h = 1}^H (W_{\tau_\last(n_h^k)}(x,a,h) + H) \right) \left(\sum_{k = 1}^{K}\sum_{h = 1}^H  \frac{H}{\tau_\last(n_h^k)} \right) \ell} \\
\stackrel{\text{(ii)}}{\le} & (1 + \eta) \sqrt{\sum_{k = 1}^{K}\sum_{h = 1}^H W_{\tau_\last(n_h^k)}(x,a,h)} \cdot \sqrt{H^2 S A \ell^2} + (1 + \eta) \sqrt{H^3 SAT \ell^2} \\
\stackrel{\text{(iii)}}{\le} & O(\sqrt{H^3 SAT \ell^2})
\label{eq:first_term_beta}
\end{align}
where inequality (i) follows from the Cauchy–Schwarz inequality, inequality (ii) follows from the facts that $\tau_\last(n_h^k) \geq n_h^k/(1 + \eta)$ and $\sum_{k = 1}^K (n_h^k)^{-1}$ is maximized when $N_h^K(x,a) = K/SA$ for all $x,a$, and inequality (iii) follows from \eqref{eq:upper_bound_w}.

The summation of the second term in \eqref{eq:Bernst_term_4} can be upper bounded by
\begin{align}
\sum_{k = 1}^{K}\sum_{h = 1}^H \frac{\sqrt{H^7 S A}\cdot\ell}{\tau_\last(n_h^k)} \leq \sum_{k = 1}^{K}\sum_{h = 1}^H \frac{(1 + \eta)\sqrt{H^7 S A}\cdot\ell}{n_h^k} \leq (1 + \eta)\sqrt{H^9 S^3 A^3 \ell^4},
\label{eq:second_term_beta}
\end{align}
by following $\tau_\last(n_h^k) \geq n_h^k/(1 + \eta)$ and $1 + 1/2 + 1/3 + \cdots \leq \ell$. 

Putting \eqref{eq:Bernst_term_4}, \eqref{eq:first_term_beta}, and \eqref{eq:second_term_beta} together, we have
\begin{align}
\sum_{k = 1}^{K}\sum_{h = 1}^H \beta_{\tau_\last(n_h^k)} \leq O\left(\sqrt{H^3 S A T \ell^2} + \sqrt{S^3 A^3 H^9 \ell^4}\right).
\end{align}

For the remaining terms, we can adapt the proof of Theorem 2 in~\cite{jin2018q} and obtain a propagation of error inequality. Thus, we deduce that the regret is bounded by $O(\sqrt{H^3 S A T \ell^2} + \sqrt{S^3 A^3 H^9 \ell^4})$. The bound on local switching cost can be adapted from the proof of Theorem \ref{theorem:ql-ucb2}. This concludes the proof.
\end{proof-of-theorem}
\section{Proof of Corollary~\ref{corollary:pac-bound}}
\label{appendix:proof-pac-bound}

Consider first Q-Learning with UCB2H exploration. By
Theorem~\ref{theorem:ql-ucb2}, we know that the regret is bounded by
$\wt{O}(\sqrt{H^4SAT})$ with high probability, that is, we have
\begin{equation*}
  \sum_{k=1}^K V_1^\star(x_1) - V_1^{\pi_k}(x_1) \le
  \wt{O}(\sqrt{H^4SAT}).
\end{equation*}
Now, define a stochastic policy $\what{\pi}$ as
\begin{equation*}
  \what{\pi} = \frac{1}{K}\sum_{k=1}^K \pi_k.
\end{equation*}
By definition we have
\begin{equation*}
  \E\left[V_1^\star(x_1) - V_1^{\what{\pi}}(x_1)\right] =
  \frac{1}{K}\sum_{k=1}^K \left[V_1^\star(x_1) - V_1^{\pi_k}(x_1)
  \right] \le \wt{O}\left(\frac{\sqrt{H^4SAT}}{K}\right) = \wt{O}\left(
    \sqrt{\frac{H^5SA}{K}} \right).
\end{equation*}
So by the Markov inequality, we have with high probability that
\begin{equation*}
  V_1^\star(x_1) - V_1^{\what{\pi}}(x_1) \le \wt{O}\left(
    \sqrt{\frac{H^5SA}{K}} \right).
\end{equation*}
Taking $K=\wt{O}(H^5SA/\eps^2)$ bounds the above by $\eps$.

For Q-Learning with UCB2B exploration, the regret bound is
$\wt{O}(\sqrt{H^3SAT})$. A similar argument as above gives that
$K=\wt{O}(H^4SA/\eps^2)$ episodes guarantees an $\eps$ near-optimal
policy with high probability. \qed
\section{Proof of Theorem~\ref{theorem:concurrent}}
\label{appendix:concurrent}
We first present the concurrent version of low-switching cost
Q-learning with \{UCB2H, UCB2B\} exploration.

\paragraph{Algorithm description}
At a high level, our algorithm is a very intuitive parallelization of
the vanilla version -- we ``parallelize as much as you can'' until we
have to switch.

More concretely, suppose the policy $Q_h$ has been switched $(t-1)$
times and we have a new policy yet to be executed. We execute this
policy on all $M$ machines, and read the observed trajectories from
machine $1$ to $M$ to determine a number $m\in\set{1,\dots,M}$ such
that the policy needs to be switched (according to the UCB2 schedule)
after $m$ episodes. We then only keep the data on machines
$1,\dots,m$, use them to compute the next policy, and throw away all
the rest of the data on machines $m+1,\dots,M$. The full
algorithm is presented in
Algorithm~\ref{algorithm:concurrent-ql-ucb2}.



\begin{algorithm*}[ht]
   \caption{Concurrent Q-learning with UCB2 scheduling}
   \label{algorithm:concurrent-ql-ucb2}
\begin{algorithmic}
  \INPUT One of the UCB2-\{Hoeffding, Bernstein\} bonuses for updating
  $\wt{Q}$.  \STATE {\bfseries Initialize:} $\wt{Q}_h(x,a)\setto H$,
  $Q_h\setto \wt{Q}_h$, $t\setto 1$. 
   \WHILE{stopping criterion not satisfied}
   \FOR{rounds $r_t=1,2,\dots$}
   \STATE Play according to $Q_h$ concurrently on all $M$
   machines and store the trajectories.
   \STATE Aggregate the trajectories and feed them sequentially into
   the UCB2 scheduling to determine whether a switch is needed.
   \IF{Switch is needed after $m\in\set{1,\dots,M}$ episodes}
   \STATE {\bf BREAK}
   \ENDIF
   \ENDFOR
   \STATE Update the policy $\wt{Q}_h$ from all the $M(r_t-1)+m$
   stored trajectories using \{Hoeffding, Bernstein\} bonus.
   \STATE Set $Q_h(\cdot,\cdot)\setto \wt{Q}_h(\cdot,\cdot)$ and
   $t\setto t+1$.
   \ENDWHILE
\end{algorithmic}
\end{algorithm*}

\subsection{Proof of Theorem~\ref{theorem:concurrent}}
The way that Algorithm~\ref{algorithm:concurrent-ql-ucb2} is
constructed guarantees that its execution path is \emph{exactly
  equivalent} (i.e. equal in distribution) to the execution path of
the vanilla non-parallel Q-Learning with UCB2\{H, B\} exploration,
except that it does not fully utilize the data on all $M$ machines and
needs to throw away some data. As a corollary, if the non-parallel
version plays $L_t$ episodes in between the $(t-1)$-th and $t$-th
switch, then the parallel/concurrent version will play the same
episodes in $\ceil{L_t/M}$ rounds.

Now, suppose we wish to play a total of $K$ episodes concurrently with
$M$ machines, and the corresponding non-parallel version of Q-learning
is guaranteed to have at most $N_{\sw}$ local switches with $L_t$
episodes played before each switch. Let $R$ denote the total number of
rounds, then we have
\begin{equation*}
  R = \sum_{t=1}^{N_{\sw}} r_t = \sum_{t=1}^{N_{\sw}}
  \ceil{\frac{L_t}{M}} \le \sum_{t=1}^{N_{\sw}} \left(1 +
    \frac{L_t}{M}\right) \le N_{\sw} + \frac{K}{M}.
\end{equation*}
Now, to find $\eps$ near-optimal policy, we know by
Corollary~\ref{corollary:pac-bound} that Q-learning with
\{UCB2H, UCB2B\} exploration requires at most
\begin{equation*}
  K = O\left( \frac{H^{\set{5,4}}SA\log(HSA)}{\eps^2} \right)
\end{equation*}
episodes. Further, choosing $K$ as above, by
Theorem~\ref{theorem:ql-ucb2} and~\ref{theorem:ql-ucb2-Bernstein},
the switching cost is bounded as
\begin{equation*}
  N_{\sw} \le O\left(H^3SA\log(K/A)\right) =
  O\left(H^3SA\log(HSA/\eps)\right).
\end{equation*}
Plugging these into the preceding bound on $R$ yields
\begin{equation*}
  R \le O\left( H^3SA\log(HSA/\eps) +
    \frac{H^{\set{5,4}}SA\log(HSA)}{\eps^2M} \right) = \wt{O}\left(
    H^3SA + \frac{H^{\set{5,4}}SA}{\eps^2M} \right),
\end{equation*}
the desired result. \qed

\subsection{Concurrent algorithm with mistake bound}
\label{appendix:concurrent-mistake-bound}
Our concurrent algorithm (Algorithm~\ref{algorithm:concurrent-ql-ucb2}
can be converted straightforwardly to an algorithm with low mistake
bound. Indeed, for any given $\eps$, by
Theorem~\ref{theorem:concurrent}, we obtain an $\eps$ near-optimal
policy with high probability by running
Algorithm~\ref{algorithm:concurrent-ql-ucb2} for
\begin{equation*}
  \wt{O}\left(H^3SA + \frac{H^{\set{5,4}}SA}{\eps^2M} \right)
\end{equation*}
rounds. We then run this $\eps$ near-optimal policy forever and are
guaranteed to make no mistake.

For such an algorithm, with high probability, ``mistakes'' can only
happen in the exploration phase. Therefore the total amount of
``mistakes'' (performing an $\eps$ sub-optimal action) is upper
bounded by the above number of exploration rounds multiplied by $HM$,
as each round consists of at most $M$ machines\footnote{To have a fair
  comparison with CMBIE, if a round does not utilize all $M$ machines,
  we still let all $M$ machines run and count their actions as their
  ``mistakes''.}  each performing $H$ actions. This yields a mistake
bound
\begin{equation*}
  \wt{O}\left(H^4SAM + \frac{H^{\set{6,5}}SA}{\eps^2} \right)
\end{equation*}
as desired.
\section{Proof of Theorem~\ref{theorem:lower-bound}}
\newcommand{\Unif}{{\rm Unif}}
\label{appendix:proof-lower-bound}
Recall that $\mc{M}$ denotes the set of all MDPs with horizon $H$,
state space $S$, action space $A$, and deterministic rewards in
$[0,1]$.  Let $K$ be the number of episodes that we can run, and
$\mc{A}$ be any RL algorithm satisfying that
\begin{equation*}
  N_{\sw} = \sum_{(h,x)}n_{\sw}(h, x) \le HSA/2
\end{equation*}
almost surely. We want to show that
\begin{equation*}
  \sup_{M\in\mc{M}} \E_{x_1, M}\left[\sum_{k=1}^K
  V_1^\star(x_1) - V_1^{\pi_k}(x_1)\right] \ge \Omega(K),
\end{equation*}
i.e. the worst case regret is linear in $K$.


\subsection{Construction of prior}
Let $a^\star:[H]\times[S]\to[A]$ denote a mapping that maps each
$(h,x)$ to an action $a^\star(h,x)\in[A]$. There are $A^{HS}$ such
mappings. For each $a^\star$, define an MDP $M_{a^\star}$ where the
transition is uniform, i.e.
\begin{equation*}
  x_1\sim\Unif([S]),~~x_{h+1}|x_h=x, a_h=a\sim\Unif([S])~~\textrm{for
    all}~(x,a)\in[S]\times[A],~h\in[H]
\end{equation*}
and the reward is 1 if $a_h=a^\star(h, x_h)$ and 0 otherwise, that is,
\begin{equation*}
  r_h(x, a) = \indic{a = a^\star(h, x)}.
\end{equation*}
Essentially, $M_{a^\star}$ is just a $H$-fold connection of $S$
parallel bandits that are $A$-armed, where $a^\star(h,x)$ is the only 
optimal action at each $(h,x)$.

For such MDPs, as the transition does not depend on the policy, the
value functions can be expressed explicitly as
\begin{equation*}
  \E_{x_1}[V_1^\pi(x_1)] = \frac{1}{S}\sum_{(h,x)\in[H]\times[S]}
  \indic{\pi_h(x)=a^\star(h,x)},
\end{equation*}
and we clearly have
\begin{equation*}
  \E_{x_1}[V_1^\star(x_1)] \equiv H.
\end{equation*}

\subsection{Minimax lower bound}
Using the sup to average reduction with the above prior, we have
the bound
\begin{align*}
  & \quad \sup_{M\in\mc{M}} \E_{x_1, M}\left[\sum_{k=1}^K
    V_1^\star(x_1) - V_1^{\pi_k}(x_1)\right] \ge
    \E_{a^\star} \E_{M_{a^\star}} \left[ KH -
    \sum_{k=1}^K V_1^{\pi_k}(x_1) \right] \\
  & = KH - \sum_{k=1}^K \E_{a^\star,M_{a^\star}}[V_1^{\pi_k}(x_1)].
\end{align*}
It remains to upper bound $\E_{a^\star,M_{a^\star}}[V_1^{\pi_k}(x_1)]$
for each $k$.

For all $k\ge 1$, let
\begin{equation*}
  n_\sw^k(h,x) \defeq \sum_{j=1}^{k-1} \indic{\pi_j^h(x) \neq
    \pi_{j+1}^h(x)}~~~{\rm and}~~~N_\sw^k = \sum_{h,x} n_\sw^k(h,x)
\end{equation*}
denote respectively the switching cost at a single $(h,x)$ and the
total (local) switching cost. We use the switching cost to upper bound
$\E_{a^\star, M_{a^\star}}[V_1^{\pi_k}]$.

Let
\begin{equation*}
  A_k(h,x) \defeq \set{\pi_1^h(x),\dots,\pi_k^h(x)} \subseteq [A]
\end{equation*}
denote the set of visited actions at timestep $h$ and state $x$.
Observe that
\begin{align*}
  \E_{a^\star, M_{a^\star}}[V_1^{\pi_k}] =
  \frac{1}{S}\sum_{h,x}\E\left[\indic{a^\star(h,x)=\pi_k^h(x)}\right]
  \le \frac{1}{S}\sum_{h,x}\underbrace{\E[\indic{a^\star(h,x)\in
  A_k(h,x)}]}_{\defeq \Phi_k(h,x)}.
\end{align*}
Therefore it suffices to bound $\Phi_k(h,x)$.

It is clear that algorithms that only switch to unseen actions can
maximize the value function, so we henceforth restrict attention on
these algorithms. Let $a^\star=a^\star(h,x)$ and
$n_{\sw}^k=n_{\sw}^k(h,x)$ for convenience. Let
\begin{equation*}
  A_k(h,x) = \set{a^1,a^2,\dots,a^{n_\sw^k+1}}
\end{equation*}
be the ordered set of unique actions that have been taken at $(h,x)$
throughout the execution of the algorithm. We have
\begin{align*}
  & \quad \Phi_k(h, x) = \P(a^\star\in A_k(h,x)) = \P\left( \bigcup_{j\ge 1}
    \set{n_\sw^k+1\ge j, a^\star\notin\set{a^1,a^2,\dots,a^{j-1}},
    a^\star=a^j}\right) \\
  & = \sum_{j\ge 1} \P(n_\sw^k+1\ge j) \cdot
    \P(a^\star\notin\set{a^1,a^2,\dots,a^{j-1}}, 
    a^\star=a^j \mid n_\sw^k+1\ge j).
\end{align*}
Now, suppose we know that $n_\sw^k+1\ge j$, then the algorithm have
seen the reward on $a^1,\dots,a^{j-1}$. By the uniform prior of
$a^\star$, if the algorithm has observed the rewards for all $a\in S$
and found that $a^\star\notin S$, the corresponding posterior for
$a^\star$ would be uniform on $[A]\setminus S$. Therefore, we have
recursively that
\begin{equation*}
  \P(a^\star\notin \set{a^1,\dots,a^{j-1}}, a^\star=a^j \mid
  n_\sw^k+1\ge j) = \prod_{\ell=1}^{j-1} \frac{A-\ell}{A-\ell+1} \cdot
  \frac{1}{A-j+1} = \frac{1}{A}.
\end{equation*}
Substituting this into the preceding bound gives
\begin{equation*}
  \Phi_k(h, x) = \frac{1}{A} \sum_{j\ge 1} \P(n_\sw^k+1\ge
  j) = \frac{\E[n_\sw^k+1]}{A}
\end{equation*}
and thus
\begin{equation*}
  \E_{a^\star, M_{a^\star}}[V_1^{\pi_k}] \le \frac{1}{S}\sum_{h, x}
  \Phi_k(h,x) \le \frac{1}{S}\sum_{h, x} \frac{\E[n_\sw^k(h,x)+1]}{A}
  \le \frac{H}{A} + \frac{\E[N_\sw^k]}{SA}
\end{equation*}
As $N_\sw^k\le N_\sw^K\le HSA/2$ almost surely, we have for all $k$ that
\begin{equation*}
  \E_{a^\star, M_{a^\star}}[V_1^{\pi_k}] \le H/A + H/2
  \le 3H/4
\end{equation*}
when $A\ge 4$ and thus the regret can be lower bounded as
\begin{equation*}
  KH - \sum_{k=1}^K \E_{a^\star, M_{a^\star}}[V_1^{\pi_k}] \ge KH/4,
\end{equation*}
concluding the proof. \qed

\end{document}